\newcommand{\theproperty}{finite memory\xspace}
\newcommand{\m}[1]{\mathsf{#1}}
\newcommand{\mc}[1]{\mathcal{#1}}
\renewcommand{\SS}{\mc S} 
\newcommand{\PP}{\mc P} 
\newcommand{\FF}{\mc F} 
\newcommand{\TT}{\mc T} 
\newcommand{\VV}{\mc V} 
\newcommand{\WW}{\mc W} 
\newcommand\nextvars{\VV^\bigcirc}
\newcommand\wnextvars{\VV^{\bigcirc\kern-.725em\sim}}
\newcommand{\DG}{\textup{DG}} 
\newcommand{\lemref}[1]{Lem.~\ref{lem:#1}}
\newcommand{\defref}[1]{Def.~\ref{def:#1}}
\newcommand{\secref}[1]{Sec.~\ref{sec:#1}}
\newcommand{\tabref}[1]{Tab.~\ref{tab:#1}}
\newcommand{\thmref}[1]{Thm.~\ref{thm:#1}}
\newcommand{\exaref}[1]{Ex.~\ref{exa:#1}}
\newcommand{\figref}[1]{Fig.~\ref{fig:#1}}
\newtheorem{thm}{Theorem}
\newtheorem{example}{Example}
\newtheorem{definition}{Definition}
\newtheorem{corollary}{Corollary}
\newtheorem{proposition}{Proposition}
\crefname{proposition}{Prop.}{Props.}
\Crefname{proposition}{Proposition}{Propositions}
\newcommand{\xmark}{\ding{55}}
\renewcommand{\vec}[1]{\overline{#1}}
\newcommand{\hist}{h}
\newcommand{\combine}[2]{{#1}\,{\circledast}\,{#2}}
\begin{document}

\begin{frontmatter}

\title{Decidable Fragments of LTL\textsubscript{f} Modulo Theories (Extended Version)}

\author[A]{\fnms{Luca}~\snm{Geatti}\orcid{0000-0002-7125-787X}}
\author[B]{\fnms{Alessandro}~\snm{Gianola}\orcid{0000-0003-4216-5199}}
\author[B]{\fnms{Nicola}~\snm{Gigante}\orcid{0000-0002-2254-4821}}
\author[B]{\fnms{Sarah}~\snm{Winkler}\orcid{0000-0001-8114-3107}}

\address[A]{University of Udine, Italy}
\address[B]{Free University of Bozen-Bolzano, Italy}

\begin{abstract}
  We study Linear Temporal Logic Modulo Theories over Finite Traces
  (\LTLfMT), a recently introduced extension of \LTL over finite traces
  (\LTLf) where propositions are replaced by first-order formulas and where
  first-order variables referring to different time points can be compared.
  In general, \LTLfMT was shown to be semi-decidable for any decidable
  first-order theory (\eg linear arithmetics), with a tableau-based
  semi-decision procedure. 

  In this paper we present a sound and complete
  pruning rule for the \LTLfMT tableau.  We show that for any \LTLfMT
  formula that satisfies an abstract, semantic condition, that we call
  \theproperty, the tableau augmented with the new rule is also guaranteed to terminate.
  Last but not least, this technique
  allows us to establish novel decidability results for the
satisfiability of several
  fragments of \LTLfMT, as well as to give new decidability proofs for
  classes that are already known.
\end{abstract}

\end{frontmatter}

\section{Introduction}
\label{sec:intro}

Linear Temporal Logic (\LTL)~\cite{Pnueli77} and its finite-traces counterpart
(\LTLf)~\cite{DeGiacomoV13} are among the most popular formalisms to express
properties of systems both in the formal verification and artificial
intelligence communities. \LTLf has also recently gained traction in business process modeling (BPM)~\cite{DeGiacomoMGMM14,FMW22},
where the real execution of a (business) process is assumed to be always finite.

Due to its propositional nature, \LTL is inherently limited to the
modeling of finite-state systems, while many real-world scenarios, \eg
systems involving numeric data 
or data-aware processes \cite{CalvaneseGGMR19,CGGMR20,CalvaneseGM13}, are better modeled
as infinite-state systems, for which a \emph{first-order} setting is 
needed.
Thus, various first-order extensions of \LTL have been studied
in the literature. 
%
Generally speaking, existing results in this direction are either
purely theoretical (\eg~\cite{KontchakovLWZ04}), or they have been developed
with specific practical scenarios in mind and appear difficult to apply to more
general ones~(\eg~\cite{CimattiGMRT20,DDV12,DLV19}).

As a coherent and principled approach to mitigate this situation, the logic of \emph{\LTLf modulo theories} (\LTLfMT) has
been recently introduced~\cite{GeattiGG22}. \LTLfMT extends \LTLf by
replacing propositions with general first-order formulas interpreted over
arbitrary theories, similar to how \emph{satisfiability modulo theories}
(SMT) extends the Boolean satisfiability problem, and by allowing
comparisons between first-order variables referring to possibly different
time points.

In general, \LTLfMT is undecidable, and it has been shown to be
semi-decidable if applied to decidable first-order fragments and/or
theories~\cite{GeattiGG22,GeattiGG22arXiv}: Crucially, the semi-decidability result has been shown by
providing an effective SMT-based encoding of a tree-shaped tableau that,
once implemented in the BLACK temporal reasoning system \cite{GeattiGM19,geatti2021one},
has proved to work well in practice. Moreover, being theory-agnostic, the
technique works in many different scenarios, leveraging the many expressive
theories, and combinations thereof, supported by modern SMT solvers~\cite{BarrettSST21}. Hence,
\LTLfMT provides a general and theoretically well-founded common ground for
first-order temporal logics that, at the same time, can be applied to
complex scenarios.
The \emph{satisfiability} problem asks whether for a given temporal logic formula $\phi$ there exists a trace that satisfies $\phi$. Satisfiability is a central problem in linear-time temporal logics since a range of key verification tasks, including model checking, can be reduced to it~\cite{RozierV10,LiPZVR20}.

While undecidability is unavoidable when considering expressive infinite-state systems and logics to describe them~\cite{CDMP22,CDMP18,GhilardiGMR22,DeutschLV16},
reasoning and verification has been shown decidable in several
specific cases~\cite{CGGMR20,FMW22,DD07, DDV12}.  It is thus natural to ask which fragments of \LTLfMT have a decidable
satisfiability problem.

In this paper, we address this question in a general way. First, we extend the
tree-shaped tableau for \LTLfMT provided in \cite{GeattiGG22} with a
\emph{pruning rule} that guarantees soundness and \emph{completeness} for any
\emph{decidable} first-order theory, and we give a very general semantic,
sufficient condition, called \emph{\theproperty}, that guarantees that the
tableau, augmented with the new rule, is finite (hence, that its construction
terminates). This equips \LTLfMT with a sound and complete semi-decision
procedure that, in particular, is guaranteed to terminate for any formula that
satisfies the \theproperty property. 

In the next step, we identify a number of syntactic fragments of \LTLfMT that satisfy
the \theproperty property, and are therefore decidable. 
In this way, we both derive novel
decidability results, and recast and generalise existing ones in
this framework. In particular, we prove decidability for \LTLfMT formulas that
either: do not 
compare variables at different time points; only use temporal operators
$\eventually$, $\tomorrow$, and $\weaktomorrow$; belong to a  \emph{bounded
lookback} fragment that restrict variable dependencies in a way to require
only a bounded amount of memory; or that are interpreted over arithmetic
theories but with first-order subformulas restricted to
variable-to-variable/constant comparisons.

A crucial feature of the new pruning rule is that it is sound
and complete in the general case. It is hence always applicable,
avoiding the need to identify the fragment of the input formula beforehand.
This feature will ease implementation (which we leave for future work), because
a single procedure can be implemented, and optimized, that works for a wide
range of decidable fragments as well as for the semi-decidable general case.
%
These results further improve the applicability of \LTLfMT in many scenarios involving complex infinite-state systems, e.g. verification tasks from the areas of knowledge representation  or BPM~\cite{CGGMR20,CalvaneseGGMR19,FMW22,DDV12,DLV19}. Moreover, one may lift the known connection between automated planning and propositional LTL~\cite{BacchusK00,CamachoBMM18} to a first-order, data-aware setting, and use \LTLfMT to address planning problems based on expressive theories. 

\smallskip
The paper is structured as follows. We introduce the relevant background in
\cref{sec:background}. Then, \cref{sec:tableau} provides the new pruning rule
for \LTLfMT and proves that it maintains soundness and completeness.
\cref{sec:decidable} defines the condition of \emph{finite memory}, proves the
termination of the tableau for formulas satisfying such condition, and
identifies a number of decidable fragments of \LTLfMT. Finally,
\cref{sec:conclusions} concludes discussing related work and future directions.

\section{Background}
\label{sec:background}

We consider a given first-order multi-sorted \emph{signature} $\Sigma=\langle\SS,
\PP, \FF, \VV, \WW\rangle$, where $\SS$ is a set of sorts; $\PP$ is a set
of predicate and $\FF$ a set of function symbols; $\VV$ is
a finite, non-empty set of \emph{data variables}; and $\WW$ is a set of
variables disjoint from $\VV$ that will be used for quantification; all
variables are associated with a sort in $\SS$.  Each predicate and function
symbol is supposed to have a type taking sorts from $\SS$; constant symbols
are represented by zero-ary function symbols.  We assume that $\Sigma$
contains equality predicates for all sorts.

Then, $\Sigma$-terms $t$ are built according to the following grammar:
\[
t := v \mid w \mid f(t_1, \dots, t_k) \mid \nextvar v \mid \wnextvar v 
\]
where $v \in \VV$, $w \in \WW$, $f\in \FF$ has arity $k$, and each $t_i$ is a term of appropriate sort. Intuitively, $\nextvar$ and $\wnextvar$ are the \emph{next} and \emph{weak next} operators, that represent the value of a variable $v\in \VV$ in the next state (see the semantics below).
An atom is of the form $p(t_1,\dots, t_k)$, where $p \in \PP$ is a predicate symbol of arity $k$,
and $t_i$ are terms of appropriate sort. Then, \LTLfMT formulas are defined as follows:
\[
\begin{array}{rl}
\lambda &:= a \mid \neg a \mid \lambda_1 \land \lambda_2 \mid \lambda_1 \lor \lambda_2 \mid \exists w.\,\lambda \mid \forall w.\,\lambda \\
\phi &:= \top \mid \lambda \mid \phi_1 \land \phi_2 \mid \phi_1 \lor \phi_2 \mid \tomorrow \phi \mid \weaktomorrow \phi \mid \phi_1 \until \phi_2 \mid \phi_1 \release \phi_2
\end{array}
\]
where $a$ is an atom and $w\,{\in}\, \WW$.
Formulas $\lambda$ as above are called \emph{first-order formulas}.
We call $\phi$ a \emph{state formula} if all its free variables are in $\VV$.
$\Sigma$-formulas without free variables are \emph{$\Sigma$-sentences}, and a
set of $\Sigma$-sentences is a \emph{$\Sigma$-theory} $\TT$.
Note the difference between the \emph{next}
($\nextvar$) and \emph{weak next} ($\wnextvar$) operators, acting on variables, and
the \emph{tomorrow} $(\ltl{X})$, and \emph{weak tomorrow} ($\ltl{wX}$) temporal
operators, acting on formulas.

To define the semantics of first-order formulas, we use the standard notion of a
\emph{$\Sigma$-structure} $M$, which associates each sort $s\in \SS$ with a
domain $s^M$, and each predicate $p\in \PP$ and function symbol $f\in \FF$ with
a suitable interpretation $p^M$ and $f^M$. The equality predicates have the
usual interpretation given by the identity relation. The carrier of $M$,
\ie the union of all domains of sorts in $\SS$, is denoted by $|M|$.
%
%
A function $\alpha\colon \VV \to |M|$ is a \emph{state variable assignment} with respect to $M$, while a function $\gamma\colon \WW\to |M|$ is an \emph{environment},
where we assume in both cases that all variables are mapped to elements of their domain. We write $\gamma[u \mapsto e]$ for the environment $\gamma$ extended with a binding from $u$ to $e$.
A \emph{run} is a pair $\sigma = (M, \langle\alpha_0, \dots,
\alpha_{n-1}\rangle)$ of a $\Sigma$-structure $M$ and a sequence of state
variable assignments with respect to $M$, and $|\sigma|=n$ is its length.

\begin{example}
\label{exa:run}
Let $\VV$ consist of variables $x$ and $y$ of sort \emph{int},
and $M$ be the (unique) model of the theory of linear arithmetic over the integers (\LIA).
Then \eg $(M,\vec \alpha)$ is a run of length 3, for
$\vec \alpha = \langle \{x \mapsto -1, y \mapsto 0\}, \{x \mapsto 0, y \mapsto 1\},\{x \mapsto 2, y \mapsto 2\}\rangle$.
\end{example}

For such a run $\sigma$, some $i$ with $0\leq i < n$, and an environment
$\gamma$, a term $t$ is \emph{well-defined} if $i<n{-}1$, or $t$ does not contain
subterms of the form $\nextvar v$ or $\wnextvar v$. In this case, the evaluation
of the term $t$ is denoted $\eval{t}_{\sigma,\gamma}^i$, and defined as
follows:
\[
\begin{array}{r@{\,}lr@{\,}l}
\eval{v}_{\sigma,\gamma}^i &= \alpha_i(v) & 
\eval{\nextvar v}_{\sigma,\gamma}^i &= \eval{\wnextvar v}_{\sigma,\gamma}^i = \alpha_{i+1}(v) \\
\eval{w}_{\sigma,\gamma}^i &= \gamma(w) &
\eval{f(t_1,\dots,t_k)}_{\sigma,\gamma}^i &= f^M(\eval{t_1}_{\sigma,\gamma}^i, \dots ,\eval{t_k}_{\sigma,\gamma}^i)
\end{array}
\]
where $v\,{\in}\,\VV$ and $w\,{\in}\,\WW$.
Satisfaction of a first-order formula $\lambda$ with respect to an environment $\gamma$ in the run $\sigma$ with $i < |\sigma|$, denoted 
$\sigma \models^i_\gamma \lambda$, is defined as follows:
\begin{alignat}{3}
\sigma \models^i_\gamma & p(t_1, \dots, t_k) && \text{if $t_1, \dots, t_k$ are well-defined and} \\[-1ex]
&&&(\eval{t_1}_{\sigma,\gamma}^i, \dots ,\eval{t_k}_{\sigma,\gamma}^i) \in p^M,
\text{ or}\\[-0.5ex]
&&&\text{if some $t_1, \dots, t_k$ is not well-defined and}\\[-0.5ex]
&&&\text{$t_1, \dots, t_k$ contain $\wnextvar$ but do not contain $\nextvar$}\\
\sigma \models^i_\gamma & \neg p(t_1, \dots, t_k) \quad &&
\text{if }\sigma \not\models^i_\gamma p(t_1, \dots, t_k)\\
\sigma \models^i_\gamma & \lambda_1 \land \lambda_2 &&
\text{if }\sigma \models^i_\gamma \lambda_1 \text{ and }\sigma \models^i_\gamma \lambda_2\\
\sigma \models^i_\gamma & \lambda_1 \lor \lambda_2 &&
\text{if }\sigma \models^i_\gamma \lambda_1 \text{ or }\sigma \models^i_\gamma \lambda_2\\
\sigma \models^i_\gamma & \exists w.\,\lambda &&
\text{if }\sigma \models^i_{\gamma[w\mapsto e]} \lambda
\text{ for some } e\in s^M\\
\sigma \models^i_\gamma & \forall w.\,\lambda &&
\text{if }\sigma \models^i_{\gamma[w\mapsto e]} \lambda
\text{ for all } e\in s^M
\end{alignat}
where $w$ is assumed to have sort $s$.
Satisfaction with respect to $\sigma$ is extended to a general \LTLfMT formula $\phi$ as follows:
\[
\begin{array}{@{}r@{\,}ll@{}}
\sigma \models^i & \lambda &
\text{if } \sigma \models^i_\emptyset \lambda \\
\sigma \models^i & \phi_1 \land \phi_2 &
\text{if }\sigma \models^i \phi_1 \text{ and }\sigma \models^i \phi_2\\
\sigma \models^i & \phi_1 \lor \phi_2 &
\text{if }\sigma \models^i \phi_1 \text{ or }\sigma \models^i \phi_2\\
\sigma \models^i & \tomorrow \phi &
\text{if $i<|\sigma|{-}1$ and }\sigma \models^{i+1} \phi \\
\sigma \models^i & \weaktomorrow \phi &
\text{if $i=|\sigma|{-}1$ or }\sigma \models^{i+1} \phi \\
\sigma \models^i & \phi_1 \until\phi_2 &
\text{if there is some $j$, $i \leq j<|\sigma|$ such that }\sigma \models^{j} \phi_2 \\
&& \text{and }\sigma \models^{k} \phi_1 \text{ for all }i \leq k <j \\
\sigma \models^i & \phi_1 \release\phi_2 &
\text{if either }\sigma \models^{j} \phi_2 \text{ for all }i \leq j<|\sigma|\text{, or there is}\\
&&\text{some $j$, $i \leq j<|\sigma|$ such that }\sigma \models^{j} \phi_1 \\
&& \text{and }\sigma \models^{k} \phi_2 \text{ for all }i \leq k \leq j
\end{array}
\]
Finally, $\sigma$ \emph{satisfies} $\phi$, denoted by $\sigma \models \phi$,  if
$\sigma \models^0 \phi$ holds.
We use the usual shorthands $\eventually \phi \equiv (\top \until \phi)$ and $\always \phi \equiv (\bot \release \phi)$, where $\top \equiv a \vee \neg a$ for any atom $a$ and $\bot\equiv\neg\top$.
For instance, the run in \exaref{run} satisfies
$(y\,{\geq}\,x) \until (x\,{=}\,y)$ and
$\always (\wnextvar x\,{>}x)$, but not $\always (\nextvar x\,{>}x)$ as no first-order formula with $\nextvar$ holds in the last instant.

Let $\nextvars = \{\nextvar v \mid v\in \VV\}$ be the set of all the \emph{next}
variables of $\VV$, and similarly for $\wnextvars$. 
A first-order
formula $\phi$ without $\nextvars \cup \wnextvars$ is satisfied by some
$\Sigma$-structure $M$ and state variable assignment $\alpha\colon V \to |M|$,
denoted $M,\alpha \models \phi$, if $(M,\langle \alpha\rangle) \models \phi$,
which corresponds to the usual notion of first-order satisfaction; if $\phi$ is
a sentence, we simply write $M \models \phi$. 
For a $\Sigma$-structure $M$, we will write $M\in \TT$ to express that $M$ is a
model of $\TT$. A formula is called $\TT$-satisfiable if it is satisfied by some
$\sigma=(M, \vec\alpha)$ with $M\in\TT$.
Moreover, two first-order formulas $\phi_1$ and $\phi_2$ are \emph{$\TT$-equivalent}, denoted
$\phi_1 \equiv_\TT \phi_2$, if $\neg(\phi_1 \leftrightarrow \phi_2)$ is not $\TT$-satisfiable.

A $\Sigma$-theory $\TT$ has \emph{quantifier elimination} (QE) if for any
$\Sigma$-formula $\phi$ there is a quantifier-free formula $\phi'$ that is
$\TT$-equivalent to $\phi$.

%
In the paper we will sometimes refer to common SMT theories~\cite{BarrettSST21}: the theory of equality and uninterpreted functions for a given $\Sigma$ (\EUF), linear arithmetics over rationals (\LRA) and integers (\LIA).


\paragraph{Tableau for \LTLfMT}

We now recall the one-pass tree-shaped tableau for \LTLfMT presented
in~\cite{GeattiGG22}.  The closure of a formula $\phi$, denoted
$\closure{(\phi)}$, is the smallest set of formulas that contains all
subformulas of $\phi$, and, in addition, $\tomorrow (\phi_1 \until
\phi_2)$ whenever $\phi_1 \until \phi_2 \in \closure{(\phi)}$ and
$\weaktomorrow (\psi_1 \release \psi_2)$ whenever $\psi_1 \release \psi_2
\in \closure{(\phi)}$.
A \emph{tableau} for an \LTLfMT formula $\phi$ is a rooted tree in which each
node $u$ is labelled by a set of formulas $\Gamma(u) \subseteq \closure(\phi)$,
as follows. The root node $u_0$ has label $\Gamma(u_0) = \{\phi\}$, and every
other node is the result of applying one of a set of rules to its parent. If any
is applicable, one of the \emph{expansion rules}, shown in
\tabref{expansion:rules}, is applied.

\begin{table}
  \centering
\begin{tabular}{llll}
\toprule
rule & $\phi \in \Gamma(u)$ & $\Gamma_1(\phi)$ & $\Gamma_2(\phi)$ \\
\midrule
\textsf{DISJUNCTION} & $\psi \lor \chi$ & $\{\psi\}$ & $\{\chi\}$ \\
\textsf{CONJUNCTION} & $\psi \land \chi$ & $\{\psi,\ \chi\}$ \\
\textsf{UNTIL} & $\psi \until \chi$ & $\{\chi\}$ & $\{\psi,\ \tomorrow (\psi \until \chi)\}$\\
\textsf{RELEASE} & $\psi \release \chi$ & $\{\psi,\ \chi\}$ & $\{\chi,\ \weaktomorrow (\psi \release \chi)\}$\\
\bottomrule
\end{tabular}
\caption{Expansion rules for \LTLfMT tableau.\label{tab:expansion:rules}}
\end{table}

When applying a rule to a formula $\phi \in \Gamma(u)$ for a node $u$, two children $u_1$ and $u_2$ of $u$ are constructed, which are labeled 
$\Gamma(u) \setminus \{\phi\} \cup \Gamma_1(\phi)$ and 
$\Gamma(u) \setminus \{\phi\} \cup \Gamma_2(\phi)$, respectively, with the second child omitted if $\Gamma_2(\phi)$ is empty.
If no expansion rule is applicable to a node $u$, the node is called
\emph{poised}. By definition of the expansion rules, such a node can contain
only atoms, or temporal formulas rooted by $\tomorrow$ and $\weaktomorrow$.
Poised nodes represent a state in a possible model for the formula. Then,
\emph{time} advances, from a poised node $u$, by applying the $\m{STEP}$ rule,
which creates a child $u'$ of $u$ such that:
\[
\m{STEP}\colon\qquad
\Gamma(u') = \{ \psi \mid 
\tomorrow \psi \in \Gamma(u)\text{ or }
\weaktomorrow \psi \in \Gamma(u)
\}
\]
However, the $\m{STEP}$ rule is only applied if the branch is not ready to be
either \emph{accepted} or \emph{rejected} by one of two \emph{termination
rules}. These rules are defined, for a branch $\vec u$, via a first-order
formula $\Omega(\vec u)$ which summarizes all constraints along the branch. The
formula is defined over the signature $\Sigma' = (\SS, \PP', \FF, \VV^\omega,
\WW)$, where $\PP' = \PP \cup \{\ell \}$ for some fresh $\ell$, and $\VV^\omega
= \bigcup_{i\in \mathbb N} V^i$ where $V^i = \{v^i \mid v\in V \}$ are indexed
versions of the variables in $\VV$. We write $\vec \VV$ for the list of
variables $(v_1, \dots, v_k)$, ordering the variables in $\VV$ in some arbitrary
but fixed way; and similarly, $\vec V^i$ for $(v_1^i, \dots, v_k^i)$.

The stepped version $t^{(i)}$ of an arbitrary term $t$ is defined as follows:
\begin{inparaenum}
\item $w^{(i)} = w$ for all $w\in \WW$;
\item $v^{(i)} = v^i$ for all $v\in \VV$;
\item $(\nextvar x)^{(i)} = (\wnextvar x)^{(i)} = x^{i+1}$; and
\item $f(t_1, \dots, t_n)^{(i)} = f(t_1^{(i)}, \dots, t_n^{(i)})$.
\end{inparaenum}
We extend the notion to formulas, and set $\psi^{(i)}$ to the formula obtained
from $\psi$ by replacing each term $t$ in $\psi$ by $t^{(i)}$. The role of
$\ell$ is to denote the last position of a run. Given a first-order formula
$\phi$, the formula $L(\phi)$ is obtained by replacing all atoms $A$ containing
any term from $\nextvars$ by $\ell \land A$, and all atoms $B$ containing any
term from $\wnextvars$ (but not from $\nextvars$) by $\ell \to B$. 

More generally, we define $\Omega$ for sequences of constraints.
Let $\vec C=\seq{C_0, \dots, C_{m-1}}$ be a sequence of first-order formulas with free variables $\VV\cup \nextvars \cup \wnextvars$.
Then $\Omega(\vec C)$ is defined as
\[
\Omega(\vec C) = \bigwedge_{i=0}^{m-2} C_i^{(i)} \wedge L(C_{m-1})^{(m-1)}
\]
Notice that, according to the definition of $\psi^{(i)}$, 
only variables from $V$ 
are stepped, whereas the $\ell$ atom is left unchanged.
For a branch $\vec u$ with poised nodes $\vec \pi = \langle \pi_0, \dots, \pi_{m-1}\rangle$ and
$F(\pi_i)$ the conjunction of first-order formulas in $\pi_i$,
we set $\Omega(\vec u) = \Omega(\langle F(\pi_0), \dots, F(\pi_m-1)\rangle)$.%
\footnote{We use a slightly modified but equivalent variant of the definition of $\Omega$ from \cite{GeattiGG22}, applying the $L$ operator only to the last instant. This allows us to use a single constant $\ell$, which will simplify the definition of the \textsf{PRUNE} rule.}
Intuitively, $\Omega(\vec u)$ serves the purpose to capture a candidate model along the branch $\vec u$.

Given $\Omega(\vec u)$, the termination rules are defined as follows. The $\m{EMPTY}$ rule is responsible for acceptance:
\[
\begin{array}{@{}ll@{}}
\m{EMPTY}\colon&
\text{If $\Gamma(\pi_{m-1})$ does not contain formulas rooted by $\tomorrow$} \\
&\text{and $\Omega(\vec u) \land \neg \ell$ is satisfiable, the branch is accepted.}
\end{array}
\]

\noindent
Whereas the $\m{CONTRADICTION}$ rule is responsible for rejection:
\[
\displaystyle
\begin{array}{@{}ll@{}}
\m{CONTRADICTION}\colon
&\text{If $\Omega(\vec u)$ is $\TT\cup \EUF$-unsatisfiable, the}\\
&\text{branch $\vec u$ is rejected.}
\end{array}
\]

From \cite{GeattiGG22}, we can state the soundness and completeness of the
tableau for \LTLfMT so defined.
\begin{proposition}[\cite{GeattiGG22}]
  \label{prop:tableau}
  A tableau for an \LTLfMT formula $\phi$ contains an accepted branch if and
  only if the formula is satisfiable.
\end{proposition}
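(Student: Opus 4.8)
The plan is to establish the two implications separately, in both cases through an explicit correspondence between branches of the tableau and runs satisfying $\phi$. The pivot of the argument is the intended reading of $\Omega(\vec u)$: for a branch $\vec u$ whose poised nodes are $\pi_0,\dots,\pi_{m-1}$, a $\TT\cup\EUF$-model of $\Omega(\vec u)\land\neg\ell$ is, up to renaming, exactly a run $\sigma=(M,\langle\alpha_0,\dots,\alpha_{m-1}\rangle)$ with $M\in\TT$: the indexed copies of the data variables at level $i$ encode the assignment $\alpha_i$; the values of $\nextvar v$ and $\wnextvar v$ read at level $i$ are forced equal to $v$ at level $i{+}1$; and the $L(\cdot)$ rewriting of the last conjunct, under $\neg\ell$, makes strong-next atoms in the last instant false and weak-next atoms vacuously true, matching the satisfaction relation of \cref{sec:background}. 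The proposition $\ell$, left uninterpreted inside $\Omega$, flags whether the last recorded instant is still to be extended, which is why the termination rules test (un)satisfiability modulo $\TT\cup\EUF$.

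For \emph{soundness}, suppose a branch $\vec u$ is accepted by $\m{EMPTY}$: then $\Gamma(\pi_{m-1})$ contains no $\tomorrow$-formula and $\Omega(\vec u)\land\neg\ell$ is $\TT\cup\EUF$-satisfiable. Fixing a model, I extract a run $\sigma$ of length $m$ as above and prove, by induction on the length of the branch prefix (with an inner induction on formula structure for the expansion rules), the invariant that whenever the node currently processed lies in the $i$-th poised block, $\sigma\models^i\psi$ holds for every $\psi\in\Gamma(u)$. The cases for $\land,\lor,\until,\release$ follow from their semantics together with the unfolding equivalences built into the closure of $\phi$; the $\m{STEP}$ case follows because $\Omega$ links $\nextvar v$ and $\wnextvar v$ at level $i$ to $v$ at level $i{+}1$; and at the root this specialises to $\sigma\models^0\phi$. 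The only remaining obligation is that $\sigma$ is a legal run that genuinely ends at $m{-}1$, which is exactly what $\neg\ell$ in the last conjunct, combined with the absence of $\tomorrow$-formulas in $\pi_{m-1}$, certifies.

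For \emph{completeness}, given $\sigma=(M,\langle\alpha_0,\dots,\alpha_{n-1}\rangle)\models\phi$ with $M\in\TT$, I build a branch greedily while maintaining the dual invariant that, in the $i$-th poised block with $i<n$, every labelled formula holds in $\sigma$ at instant $i$. Whenever an expansion rule fires, the semantics guarantee that $\Gamma_1(\phi)$ or $\Gamma_2(\phi)$ still holds at $i$, so I descend into the corresponding child; at a poised node with $i<n{-}1$ I apply $\m{STEP}$, which is justified because the operands of all $\tomorrow$- and $\weaktomorrow$-formulas hold at $i{+}1$; and when $i=n{-}1$ the label can contain no $\tomorrow$-formula, so it remains only to check that $\Omega$ of the constructed branch is satisfied by reading the level-$i$ variables as $\alpha_i$ and interpreting $\ell$ as false, whence $\Omega(\vec u)\land\neg\ell$ is satisfiable and $\m{EMPTY}$ accepts. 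This branch is finite, since every $\m{STEP}$ consumes one instant of the finite run and the closure of $\phi$ is finite, bounding the number of expansion steps between consecutive applications of $\m{STEP}$.

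The step I expect to be the main obstacle is the careful bookkeeping at the final instant: checking that the $L(\cdot)$ rewriting — turning a strong-next atom $A$ into $\ell\land A$ and a weak-next atom $B$ into $\ell\to B$, here applied only to the last conjunct of $\Omega$ — exactly mirrors, in both directions and uniformly under quantifiers, the three-case definition of $\sigma\models^i_\gamma p(t_1,\dots,t_k)$ for terms that are not well-defined. A related point is justifying that reasoning modulo $\TT\cup\EUF$ instead of $\TT$ is harmless: since $\ell$ is uninterpreted, any $\TT$-model read off a run can be expanded with the intended truth value of $\ell$, and conversely the value of $\ell$ in a $\TT\cup\EUF$-model of $\Omega(\vec u)\land\neg\ell$ can be read back as the decision to stop the run. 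Finally, because we use the modified $\Omega$ announced in the footnote (applying $L$ only to the last instant, with a single constant $\ell$), a last routine check confirms that this variant is $\TT\cup\EUF$-equisatisfiable with the original one of \cite{GeattiGG22}, so that \cref{prop:tableau} transfers verbatim.
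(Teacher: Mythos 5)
Your argument is correct in substance, but note that the paper does not prove \cref{prop:tableau} itself: the result is imported from \cite{GeattiGG22}, and what the paper recalls of that proof is a factorisation through \emph{pre-models} — \cref{prop:sat:pre-model} ($\phi$ is satisfiable iff it has a satisfiable pre-model) composed with the extraction lemma \cref{prop:extraction} (every satisfiable pre-model is realised as a branch whose poised nodes carry the corresponding atoms), together with the converse reading of an accepted branch as a satisfiable pre-model. You instead prove a direct branch--run correspondence that bypasses the pre-model abstraction: soundness by reading a run off a $\TT\cup\EUF$-model of $\Omega(\vec u)\land\neg\ell$ and propagating satisfaction through the rules, completeness by greedily steering the branch with a satisfying run. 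Both routes work, and both hinge on the two delicate points you correctly isolate: the $L(\cdot)$/$\neg\ell$ treatment of ill-defined next-terms at the last instant, and the fact that $\until$-eventualities are forced to be fulfilled by the no-$\tomorrow$ side condition of $\m{EMPTY}$. What the factored version buys is reusability: pre-models are exactly the objects that \cref{thm:sound:complete} and \cref{thm:termination} later manipulate (redundant segments and history sets are defined on pre-models, not on branches), so the paper obtains its new completeness and termination results by perturbing the intermediate object rather than redoing a branch-level induction. One small repair to your write-up: in the soundness direction the invariant must be established by induction on the distance \emph{from the leaf} (suffix length), not on the prefix length — the expansion rules transfer satisfaction from child to parent (e.g.\ $\{\psi,\tomorrow(\psi\until\chi)\}$ holding at instant $i$ yields $\psi\until\chi$ at instant $i$, not conversely), and the base case is the accepted leaf, where the absence of $\tomorrow$-formulas and the chosen model of $\Omega(\vec u)\land\neg\ell$ discharge the remaining obligations.
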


The construction of the tableau for an arbitrary formula is not guaranteed to terminate,
which is to be expected since \LTLfMT is in general undecidable.
However, since accepted
branches are finite, if the formula is satisfiable, a breadth-first construction
of the tree will surely find an accepted branch. Hence, for decidable theories, this tableau
provides a \emph{semi-decision} procedure for \LTLfMT satisfiability.
\begin{proposition}[\cite{GeattiGG22}]
  \label{prop:semi-decidability}
  \LTLfMT satisfiability is semi-decidable.
\end{proposition}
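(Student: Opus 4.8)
The plan is to read off a semi-decision procedure directly from the tableau construction, using \cref{prop:tableau} as the correctness certificate. Recall that a decision problem is semi-decidable if there is an algorithm that halts and answers ``yes'' on every positive instance while possibly diverging on negative ones; so it suffices to exhibit an effective procedure that, given an \LTLfMT formula $\phi$ over a decidable theory $\TT$ (with $\TT \cup \EUF$ decidable), halts and accepts precisely when $\phi$ is satisfiable. The procedure is the obvious one: perform a breadth-first construction of the tableau tree for $\phi$, and halt accepting as soon as a node is produced whose branch is accepted by the $\m{EMPTY}$ rule.

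First I would check that this construction is indeed an effective, finitely-branching process. Each node label is a subset of the finite set $\closure(\phi)$ (finite because $\phi$ has finitely many subformulas and the closure adds only one $\tomorrow$- or $\weaktomorrow$-wrapped formula per until/release subformula); whether a node is poised and which formula to expand are purely syntactic tests; each non-poised node spawns at most two children via an expansion rule and each poised node a single child via $\m{STEP}$; and whether a branch is accepted or rejected is decidable, since the $\m{EMPTY}$ and $\m{CONTRADICTION}$ rules reduce respectively to the $\TT\cup\EUF$-satisfiability of $\Omega(\vec u)\land\neg\ell$ and the $\TT\cup\EUF$-unsatisfiability of $\Omega(\vec u)$, both concrete first-order sentences over the extended signature $\Sigma'$, hence decidable by the hypothesis on $\TT$.

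For correctness I would invoke \cref{prop:tableau}: $\phi$ is satisfiable iff the tableau contains an accepted branch. Any accepted branch is finite, because acceptance is declared by $\m{EMPTY}$ at a poised node, i.e.\ at a node of some finite depth $d$; since the tree is finitely branching, the breadth-first search enumerates all nodes of depth $\le d$ after finitely many (effective) steps, and therefore reaches this branch and halts with ``yes''. Conversely, if the procedure halts accepting then it has produced an accepted branch, so $\phi$ is satisfiable by \cref{prop:tableau}. On unsatisfiable inputs the search may terminate (if every branch is eventually rejected by $\m{CONTRADICTION}$) or run forever, which is permitted. The only point requiring care is exactly the one highlighted above — that the acceptance and rejection tests along the branch are effective, which is where decidability of the background theory is used; the rest (finite branching, finite depth of accepted branches, enumerability) is routine.
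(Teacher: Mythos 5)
Your proposal is correct and takes essentially the same route as the paper: the paper likewise argues that since accepted branches are finite, a breadth-first construction of the tableau will find an accepted branch whenever the formula is satisfiable, with correctness supplied by \cref{prop:tableau} and effectiveness of the termination tests resting on decidability of the background theory. The additional detail you provide (finite branching, decidability of the $\m{EMPTY}$ and $\m{CONTRADICTION}$ checks as $\TT\cup\EUF$-satisfiability queries) is simply a more explicit rendering of the same argument.
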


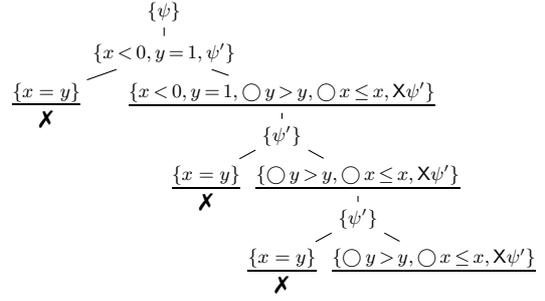
\begin{figure}[t]
  \centering
  \begin{tikzpicture}[level 2/.style={sibling distance=3.1cm}, level 4/.style={sibling distance=2cm}, node distance=3mm, level distance=5.5mm]
    \tikzstyle{tableau}=[scale=.8]
    \tikzstyle{formula}=[blue, scale=.8, anchor=west, yshift=2mm]
    \node[tableau] (0) {$\{\psi\}$}
     child {
      node[tableau] (1x) {$\{x\,{<}\,0, y\,{=}\,1, \psi'\}$}
      child { 
        node[tableau] (c1){\underline{$\{x=y\}$}} node[below=1mm]{\xmark} edge from parent[]}
     child {
      node[tableau] (1) {$\underline{\{x\,{<}\,0, y\,{=}\,1, \nextvar y\,{>}\,y, \nextvar x\,{\leq}\,x, \tomorrow\psi'\}}$}
      child {
      node[tableau] (1a) {$\{\psi'\}$}
      child { 
        node[tableau] (c2){\underline{$\{x=y\}$}} node[below=1mm]{\xmark} edge from parent[]}
      child {
        node[tableau] (2) {$\underline{\{\nextvar y\,{>}\,y, \nextvar x\,{\leq}\,x, \tomorrow\psi'\}}$}
      child {
      node[tableau] (2a) {$\{\psi'\}$}
      child { 
        node[tableau] (c3){\underline{$\{x=y\}$}} node[below=1mm]{\xmark} edge from parent[]}
      child {
        node[tableau] (3) {$\underline{\{\nextvar y\,{>}\,y, \nextvar x\,{\leq}\,x, \tomorrow\psi'\}}$}
      } edge from parent
     } edge from parent
    } edge from parent
    }}}
  ;
  \end{tikzpicture}
  \caption{Example tableau for the \LTLfMT formula $\psi$ of \exaref{1}. Poised nodes are underlined, and nodes marked {\xmark} are rejected.}
  \label{fig:example}
\end{figure}

\begin{example}
\label{exa:1}
Consider the following formula:
\begin{equation}
  \psi \coloneqq (x\,{<}\,0 \wedge y\,{=}\,1) \wedge ((\nextvar y\,{>}\,y
\wedge \nextvar x\,{\leq}\,x) \until x=y)
\end{equation} 
interpreted over \LRA. A partial tableau for $\psi$ is shown in
\figref{example}, where $\psi' \coloneqq (\nextvar y\,{>}\,y \wedge \nextvar
x\,{\leq}\,x) \until x=y$. Note that $\psi$ is unsatisfiable, but the
\textsf{CONTRADICTION} rule is not sufficient to conclude that, as the
right-most branch is going to expand forever.
\end{example}

\section{A new pruning rule for the \LTLfMT tableau}
\label{sec:tableau}

As discussed in \exaref{1}, the right-most branch of \figref{example} is the
prototypical example of a branch that expands forever because of some
unfulfillable request that is postponed forever without ever causing a local
contradiction. In Reynolds' tree-shaped tableau for propositional \LTL, this
case is handled by an ad-hoc $\m{PRUNE}$ rule, which takes care of rejecting
such branches~\cite{Reynolds16a}. 
Here, we define a similar rule for \LTLfMT. 
To this end,
we use a quantified
variant of the formula $\Omega(\vec C)$, for a sequence of first-order formulas
$\vec C$. 

\begin{definition}[History constraints]
  The \emph{history constraint} of a sequence of first-order formulas $\vec C$, denoted $\hist(\vec C)$, is defined as:
  \begin{equation}
    \hist(\vec C) = \begin{cases}
      \top & \text{if $\vec C$ is empty} \\
      (\exists V^0 \dots V^{m-1}.\ \Omega(\vec C))[\vec V^m\!\! \mathbin{/} \vec \VV] & \text{otherwise}
    \end{cases}
  \end{equation}
\end{definition}
That is, all stepped variables are existentially quantified except for the last
ones, which are renamed to $\vec \VV$, so that $\hist(\vec C)$ is a formula with
free variables $\VV$. For a branch $\vec u$ with poised nodes $\vec \pi =
\langle \pi_0, \dots, \pi_{m-1}\rangle$, let $h(\vec \pi) = h(\langle F(\pi_0), \dots, F(\pi_{m-1})\rangle)$. Intuitively, the history
constraint of a branch $\vec u$ summarises all constraints accumulated along the
branch, just like $\Omega$, but by existentially quantifying all variables except those in the last instant, it expresses the
\emph{effect} of the accumulated constraints (the history) on the variables
$\VV$. If the theory under consideration has quantifier elimination (QE), history
constraints are always equivalent to quantifier-free formulas.

%

\begin{example}
\label{exa:2}
Let $\langle\pi_0, \pi_1, \pi_2\rangle$ be the poised nodes in the right-most
branch of the tableau in \figref{example}, and denote as $\vec \pi_{\leq
i}$, for $0 \leq i \leq 2$,  the branches up to these nodes. Then, we have:
\begin{align}
\hist(\vec \pi_{\leq 0}) &= (\exists x_0\,y_0.\ 
 x_0\,{<}\,0 \land y_0\,{=}\,1 \land y\,{>}\,y_0\land x\,{\leq}\,x_0 \land \ell) \\
 & \equiv_\LRA x < 0 \wedge y > 1 \land \ell\\
\hist(\vec \pi_{\leq 1}) &= \exists x_1\,y_1\,x_0\,y_0.\ 
 x_0\,{<}\,0 \land y_0\,{=}\,1 \land y_1\,{>}\,y_0\land x_1\,{\leq}\,x_0 \land {}\\
 & \qquad y\,{>}\,y_1 \wedge x\,{\leq}\,x_1 \land \ell \\
 &\equiv_\LRA x < 0 \wedge y > 1 \land \ell\\
\hist(\vec \pi_{\leq 2}) &\equiv_\LRA \exists x_2\,y_2.\ x_2 < 0 \wedge y_2 > 1
 \land y\,{>}\,y_2 \wedge x\,{\leq}\,x_2 \land \ell \\
 &\equiv_\LRA x < 0 \wedge y > 1 \land \ell
\end{align}
Here the equivalences are obtained with quantifier elimination in \LRA,
so all history constraints are $\LRA$-equivalent.
This reflects the fact that what can be said about $x$ and $y$ after the respective nodes is always the same: $x$ is negative, and $y$ is greater than 1.
\end{example}

Intuitively, if the labels and history constraints of nodes repeat, no progress is made on this branch.
This motivates the 
next
definition.

Given a tableau branch $\vec u$ with poised nodes $\vec \pi = \langle \pi_0, \dots, \pi_{m-1}\rangle$:
\[
\begin{array}{@{}ll@{}}
\m{PRUNE}\colon
 &\text{If  $\Gamma(\pi_i) = \Gamma(\pi_{m-1})$ for some $i\,{<}\,m$ and}\\
 &\text{$\hist(\vec \pi) \models_\TT \hist(\vec \pi_{\leq i})$ then $\vec u $ is rejected.}
\end{array}
\]

Testing whether the $\m{PRUNE}$ rule applies requires 
to check entailment in the underlying theory $\TT$.
If $\TT$ is decidable, this is always possible (\eg if $\TT$ is \LIA or \LRA).
However, in \secref{decidable} we show that even for theories where this is not feasible in general, $\m{PRUNE}$ can be applied in a number of special cases.
Moreover, note that the entailment condition of the $\m{PRUNE}$ rule is
equivalent to saying that 
the set of states described by the formula $\hist(\vec \pi)$
(which represents the history effect at the end of $\pi$) is contained in the
set of states described by the formula $\hist(\vec \pi_{\leq i})$ (representing
the effect 
up to~instant~$i$).

Finally, note that even though there is an apparent overlap between the
definitions of the $\m{EMPTY}$ and $\m{PRUNE}$ rules, the two can never be
applicable together on the same node, because in this case,
$\m{EMPTY}$ would have triggered before (on the repeated node
identified by $\m{PRUNE}$), and the branch would have been already
accepted.

The rightmost branch in \figref{example} is rejected by the \textsf{PRUNE} rule:
for $\pi_1$ and $\pi_2$ the last two poised nodes on the branch, 
$\Gamma(\pi_1) = \Gamma(\pi_2)$ holds and, as shown in \exaref{2}, $\hist(\vec \pi_{\leq 1})$ and $\hist(\vec \pi_{\leq 2})$ are \LRA-equivalent.
A further example of an application of the rule follows.

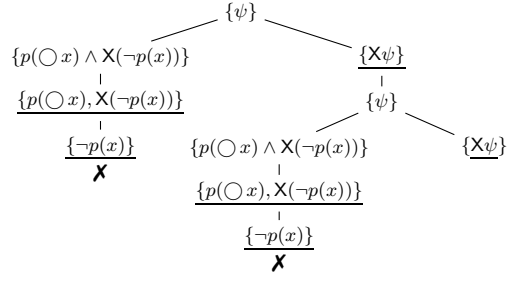
\begin{figure}[t]
  \centering
  \begin{tikzpicture}[level 1/.style={sibling distance=37mm}, level 3/.style={sibling distance=27mm}, level 5/.style={sibling distance=17mm}, node distance=3mm, level distance=6mm]
    \tikzstyle{tableau}=[scale=.8]
    \tikzstyle{formula}=[blue, scale=.8, anchor=west, yshift=2mm]
    \node[tableau] (0) {$\{\psi\}$}
      child { 
        node[tableau] (c1){$\{p(\nextvar x) \wedge \tomorrow(\neg p(x))\}$} 
        child { 
          node[tableau] (c11){\underline{$\{p(\nextvar x),\tomorrow(\neg p(x))\}$} }
            child { 
            node[tableau] (c111){\underline{$\{\neg p(x)\}$} }
            node[below=1mm]{\xmark} 
            }
          }
        }
     child {
      node[tableau] (1) {$\underline{\{ \tomorrow\psi\}}$}
      child {
      node[tableau] (1a) {$\{\psi\}$}
      child { 
        node[tableau] (c2){$\{p(\nextvar x) \wedge \tomorrow(\neg p(x))\}$} 
        child { 
          node[tableau] (c21){\underline{$\{p(\nextvar x),\tomorrow(\neg p(x))\}$} }
            child { 
            node[tableau] (c211){\underline{$\{\neg p(x)\}$} }
            node[below=1mm]{\xmark} 
            }
          }
        }
      child {
        node[tableau] (2) {$\{\underline{\tomorrow \psi}\}$}
     } edge from parent
    }}
  ;
  \end{tikzpicture}
  \caption{Example of application of the $\m{PRUNE}$ rule from \exaref{3}.}
  \label{fig:example3}
\end{figure}

\begin{example}
\label{exa:3}
Consider the following unsatisfiable formula interpreted over \EUF, for a unary
predicate $p$:
\begin{equation}
  \psi := \eventually (p(\nextvar x) \wedge
\tomorrow(\neg p(x)))
\end{equation}

The corresponding tableau is shown in \figref{example3}. Let $\vec u$ be the
rightmost branch with poised nodes $\vec \pi = \langle \pi_0, \pi_1\rangle$. 
We have $\Gamma(\pi_0) = \Gamma(\pi_1)$, and $\hist(\vec \pi_{\leq 0}) =
\hist(\vec \pi_{\leq 1}) = \top$. Thus the \textsf{PRUNE} rule applies, and
$\vec u$ can be rejected.
\end{example}

Since the $\m{PRUNE}$ rule can only reject (but not accept) branches, it may only affect
\emph{completeness}, but not soundness. 
As we prove in the remainder of this section, completeness of the
tableau calculus of \cite{GeattiGG22} is indeed preserved when augmented with
the $\m{PRUNE}$ rule. 

\paragraph{Completeness}

Here, we extend the completeness result of \cite{GeattiGG22,GeattiGG22arXiv} to
account for the additional $\m{PRUNE}$ rule. We start by defining a \emph{pre-model}, an abstract structure summarising the important aspects of
a state sequence in a tableau branch.

\begin{definition}[Atom]
  An \emph{atom} $\Delta$ for an \LTLfMT formula $\phi$ is a set $\Delta
  \subseteq \closure{(\phi)}$ such that: 
  \begin{compactenum} 
    \item the conjunction of all first-order formulas in $\Delta$ is
      $\TT$-satisfiable;
    \item for all $\psi \in \Delta$ to which a rule from
      \tabref{expansion:rules} applies, either $\Gamma_1 \subseteq \Delta$, or
      $\Gamma_2 \neq \emptyset$ and $\Gamma_2 \subseteq \Delta$; and 
    \item $\Delta$ is
      closed under logical deduction as far as $\closure{(\phi)}$ is concerned.
  \end{compactenum}  
\end{definition}

\begin{definition}
\label{def:pre:model}
A \emph{pre-model} for $\phi$ is a sequence of atoms $\vec \Delta = \langle\Delta_0, \dots, \Delta_{n-1}\rangle$ such that $\phi \in \Delta_0$, and for all $i$, $0\leq i < n$:
\begin{compactenum}
\item $\Delta_{n-1}$ does not contain any $p(t_1, \dots, t_k)$ where $\nextvars$ occurs,
\item if $\tomorrow \phi' \in \Delta_i$ then $i<n-1$ and $\phi'\in \Delta_{i+1}$,
\item if $\weaktomorrow \phi' \in \Delta_i$ then $i=n-1$ or $\phi'\in \Delta_{i+1}$,
\item if $\phi_1 \until \phi_2 \in \Delta_i$ then there is some $i\leq j < n$ such that
$\phi_2 \in \Delta_j$ and $\phi_1 \in \Delta_k$ for all $i\leq k < j$,
\item if $\phi_1 \release \phi_2 \in \Delta_i$ then either $\phi_2 \in \Delta_k$ for all $i\leq k < n$, or there is some $i\leq j < n$ such that
$\phi_1 \in \Delta_j$ and $\phi_2 \in \Delta_k$ for all $i\leq k \leq j$, and
\item all $\Delta_i$ are minimal with respect to set inclusion.
\end{compactenum}
\end{definition}

Let $F(\Delta)$ be the conjunction of all first-order formulas in an atom $\Delta$. Given a
pre-model $\vec \Delta = \langle \Delta_0, \dots, \Delta_{n-1} \rangle$, we say
that $\vec \Delta$ is \emph{satisfiable} if $\Omega(\langle
F(\Delta_0),\ldots,F(\Delta_{n-1})\rangle) \land \neg\ell$ is $\TT$-satisfiable.

Following~\cite{GeattiGG22,GeattiGG22arXiv}, one can show that from any pre-model
for an \LTLfMT formula $\phi$ one can obtain a model of $\phi$, and \viceversa, any model
of $\phi$ can be represented by a pre-model:

\begin{proposition}[\cite{GeattiGG22,GeattiGG22arXiv}]
\label{prop:sat:pre-model}
An \LTLfMT formula $\phi$ is satisfiable if and only if it has a satisfiable
pre-model.
\end{proposition}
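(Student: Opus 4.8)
The plan is to prove the two implications separately, in each case setting up a tight correspondence between a run $\sigma = (M, \langle\alpha_0,\dots,\alpha_{n-1}\rangle)$ with $\sigma \models \phi$ and a satisfiable pre-model $\vec\Delta = \langle\Delta_0,\dots,\Delta_{n-1}\rangle$, and then checking that the accumulated first-order constraints match the encoding $\Omega$ via the $L$ operator and the constant $\ell$. Throughout, I would follow the argument of \cite{GeattiGG22,GeattiGG22arXiv}, adapting it to the slightly modified $\Omega$ of the footnote, whose only difference is that $L$ is applied to the last instant alone so that a single $\ell$ suffices.

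\emph{From a model to a satisfiable pre-model.} Given $\sigma \models \phi$, I would first take, for $0 \le i < n$, the set of all $\psi \in \closure(\phi)$ with $\sigma \models^i \psi$, and then pass to pointwise-minimal subsets $\Delta_i$ that still meet the atom conditions and all clauses of \defref{pre:model}. Verifying \defref{pre:model} is then direct: $\phi \in \Delta_0$ since $\sigma \models^0 \phi$; clause~1 holds because no atom containing a $\nextvars$-term is satisfied at the last instant (such terms are not well-defined there and the atom does not fall under the vacuous clause for atoms with only $\wnextvars$-terms); clauses~2--5 are literal translations of the semantic clauses for $\tomorrow$, $\weaktomorrow$, $\until$, $\release$; and clause~6 holds by construction. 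For satisfiability of $\vec\Delta$, I would extend $M$ to a $\Sigma'$-structure $M'$ interpreting the fresh propositional symbol $\ell$ as false and sending $v^i$ to $\alpha_i(v)$ for all $v \in \VV$ and $i < n$. Since each $\Delta_i$ contains only formulas satisfied by $\sigma$ at $i$, for $i < n-1$ one gets $M' \models F(\Delta_i)^{(i)}$, because stepping sends $v$ to $v^i$ and $\nextvar v, \wnextvar v$ to $v^{i+1}$, mirroring $\eval{v}^i_{\sigma,\emptyset} = \alpha_i(v)$ and $\eval{\nextvar v}^i_{\sigma,\emptyset} = \alpha_{i+1}(v)$; and $M' \models L(F(\Delta_{n-1}))^{(n-1)}$, because with $\ell$ false $L$ makes every atom carrying a $\nextvars$-term equivalent to $\bot$ (matching the failure of such atoms at the last instant) and every atom whose stepped terms come only from $\wnextvars$ equivalent to $\top$ (matching their vacuous satisfaction there). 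Together with $M' \models \neg\ell$, this is exactly $\TT$-satisfiability of $\Omega(\langle F(\Delta_0),\dots,F(\Delta_{n-1})\rangle) \land \neg\ell$.

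\emph{From a satisfiable pre-model to a model.} Conversely, let $\vec\Delta$ be a satisfiable pre-model, witnessed by a $\Sigma'$-structure $N$ with $\ell$ false together with an assignment of the indexed variables. Put $M = N|_\Sigma$ (so $M \in \TT$) and let $\alpha_i(v)$ be the value $N$ gives to $v^i$, forming a run $\sigma = (M, \langle\alpha_0,\dots,\alpha_{n-1}\rangle)$. I would then prove, by structural induction on $\psi$, that $\psi \in \Delta_i$ implies $\sigma \models^i \psi$ for every $i < n$. The base case, a first-order formula $\lambda$, is handled by a sub-induction on the structure of $\lambda$ (carrying along the environment for the $\WW$-quantifiers) using that $N$ satisfies the corresponding conjunct of $\Omega$: for $i < n-1$ the stepping translation transfers satisfaction verbatim, and for $i = n-1$ the $L$ operator together with $\ell$ being false makes the encoding agree with the semantics of well-definedness of $\nextvars/\wnextvars$-terms at the last instant. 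The Boolean cases use that each $\Delta_i$ meets the atom conditions, so that membership of $\psi_1 \lor \psi_2$ (resp.\ $\psi_1 \land \psi_2$) forces membership of a disjunct (resp.\ of both conjuncts), and then the induction hypothesis applies. The cases for $\tomorrow$, $\weaktomorrow$, $\until$, $\release$ invoke clauses~2, 3, 4, 5 of \defref{pre:model} to obtain the required successor state or witness position, followed by the induction hypothesis on proper subformulas. Since $\phi \in \Delta_0$, we conclude $\sigma \models^0 \phi$, i.e.\ $\phi$ is satisfiable.

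\emph{Main obstacle.} The delicate part, in both directions, is the interface between the partiality of term evaluation at the final instant --- terms containing $\nextvar$ or $\wnextvar$ being well-defined only when $i < n-1$, and atoms with only $\wnextvars$-terms being satisfied vacuously there --- and its purely propositional encoding through $L$ and the single constant $\ell$; this is precisely where the present formulation departs from \cite{GeattiGG22,GeattiGG22arXiv}, so the main work is to check that their argument still goes through for the modified $\Omega$. The remaining bookkeeping (minimality in \defref{pre:model}, deductive closure of atoms, and the routine induction steps) poses no real difficulty.
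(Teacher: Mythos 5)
The paper does not actually prove \cref{prop:sat:pre-model}: it is imported verbatim from \cite{GeattiGG22,GeattiGG22arXiv}, with only the remark that ``following [those works] one can show'' both directions. Your reconstruction follows exactly the argument one expects those works to contain --- building atoms from the satisfied subsets of $\closure(\phi)$ and minimising in one direction, and extracting a run from a satisfying assignment of $\Omega$ plus a structural induction in the other --- and it correctly isolates the one point where this paper's formulation genuinely differs, namely that $L$ and the single constant $\ell$ are applied only at the last instant, so that the encoding of well-definedness of $\nextvars/\wnextvars$-terms there must be rechecked; your handling of that point is sound. I see no gap.
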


There is a precise connection between pre-models of a formula and branches of
the tableau. In particular, the following extraction lemma can be proved, as in
\cite[Lem. 2 in Appendix A]{GeattiGG22arXiv}.

For a node $u$ in a tableau for $\phi$, let the \emph{atom of $u$}, denoted
$\Delta(u)$, be the set of all formulas in $\closure(\phi)$ that are entailed by
$\Gamma(u)$.

\begin{proposition}[\cite{GeattiGG22,GeattiGG22arXiv}]
\label{prop:extraction}
If $\vec\Delta\,{=}\,\langle\Delta_0, \dots, \Delta_{n{-}1}\rangle$ is a satisfiable pre-model for $\phi$, every complete tableau for $\phi$ has a branch with step nodes $\vec \pi = \langle \pi_0, \dots, \pi_{n{-}1}\rangle$ such that $\Delta(\pi_i)\,{=}\,\Delta_i$ for all $0\,{\leq}\,i\,{<}\,n$.
\end{proposition}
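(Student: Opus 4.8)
The plan is to build the required branch greedily, processing the pre-model atoms $\Delta_0, \dots, \Delta_{n-1}$ from left to right while maintaining, throughout the tableau construction, the invariant that the label of the current node is contained in the pre-model atom currently being processed. Concretely, the branch is assembled by iterating the following step: given a node $u$ of the complete tableau with $\Gamma(u) \subseteq \Delta_i$ for some $i$ -- the base case being the root $u_0$, for which $\Gamma(u_0) = \{\phi\} \subseteq \Delta_0$ since $\phi \in \Delta_0$ -- apply the expansion rules below $u$, at every branching choosing the successor whose label stays inside $\Delta_i$, until a poised node $\pi_i$ is reached; then $\Delta(\pi_i) = \Delta_i$, and if $i < n-1$ the $\m{STEP}$-successor $u'$ of $\pi_i$ satisfies $\Gamma(u') \subseteq \Delta_{i+1}$, so the construction can continue with $i+1$ and, after $n$ iterations, yields a branch with poised nodes $\pi_0, \dots, \pi_{n-1}$ and $\Delta(\pi_i) = \Delta_i$ for all $i$.

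Most ingredients of this step are routine. That a compatible expansion choice always exists is precisely condition~(2) of the definition of an atom: for any $\psi \in \Gamma(u) \subseteq \Delta_i$ to which a rule of \tabref{expansion:rules} applies, $\Delta_i$ contains $\Gamma_1(\psi)$, or $\Gamma_2(\psi)$ is nonempty and contained in $\Delta_i$, so one of the (at most two) children of $u$ has label $\Gamma(u)\setminus\{\psi\}\cup\Gamma_k(\psi) \subseteq \Delta_i$. Termination of the expansion phase follows from a standard well-founded measure on the label -- for example the finite multiset of sizes of those formulas in it that are neither atoms nor rooted by $\tomorrow$ or $\weaktomorrow$ -- which every expansion rule strictly decreases, the freshly introduced $\tomorrow(\psi \until \chi)$ and $\weaktomorrow(\psi \release \chi)$ not contributing to it. The claim about $\m{STEP}$ is immediate from pre-model conditions~(2) and~(3): whenever $\tomorrow\psi$ or $\weaktomorrow\psi$ lies in $\Gamma(\pi_i) \subseteq \Delta_i$, these conditions force $\psi \in \Delta_{i+1}$ (using $i < n-1$ in the $\weaktomorrow$ case).

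The one genuinely delicate point -- and the step I expect to be the main obstacle -- is to prove that the poised node $\pi_i$ so obtained satisfies $\Delta(\pi_i) = \Delta_i$, and not merely $\Delta(\pi_i) \subseteq \Delta_i$. The inclusion $\Delta(\pi_i) \subseteq \Delta_i$ is immediate: the invariant gives $\Gamma(\pi_i) \subseteq \Delta_i$, and $\Delta_i$ is closed under deduction within $\closure(\phi)$ by condition~(3) of the definition of an atom, so every formula of $\closure(\phi)$ entailed by $\Gamma(\pi_i)$ already lies in $\Delta_i$. For the converse inclusion I would show that $\Delta(\pi_i)$ is itself a legitimate choice of pre-model atom at position $i$, \ie that substituting $\Delta(\pi_i)$ for $\Delta_i$ in $\vec\Delta$ again yields a pre-model -- still satisfiable, as the substitution can only weaken the accumulated first-order content -- so that minimality of $\Delta_i$, condition~(6) of \defref{pre:model}, forces $\Delta_i \subseteq \Delta(\pi_i)$. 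Checking that the substitution preserves being a pre-model is the technical core: the conditions local to position $i$ (items~4 and~5 of \defref{pre:model}) follow from exhaustiveness of the expansion rules -- every until/release formula in $\Gamma(\pi_i)$, hence in $\Delta(\pi_i)$, has been unfolded along the branch -- together with deductive closure, while the conditions linking positions $i{-}1$, $i$, and $i{+}1$ (items~2 and~3) follow from the $\m{STEP}$ rule, using that $\Gamma(u')$ captures exactly the one-step consequences of the poised label, where some care is needed with $\weaktomorrow$-formulas and the end-of-trace semantics. This is, in essence, the argument of \cite[Lem.~2, Appendix~A]{GeattiGG22arXiv}, which I would adapt.

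Finally, I would verify that the branch so constructed is actually present in the complete tableau, \ie that no termination rule cuts it off before $\pi_{n-1}$. Since $\Gamma(\pi_j) \subseteq \Delta_j$ we have $F(\Delta_j) \models F(\pi_j)$, and $\Omega$ (as well as the operator $L$) is monotone in each of its constraints, so every model of $\Omega(\langle F(\Delta_0),\dots,F(\Delta_{n-1})\rangle)$ yields, up to the interpretation of $\ell$, a model of $\Omega(\vec\pi_{\leq j})$ for each $j$; such a model exists because $\vec\Delta$ is a satisfiable pre-model, hence $\m{CONTRADICTION}$ never fires. At $\pi_{n-1}$ the label contains no $\tomorrow$-rooted formula -- by condition~(2) of \defref{pre:model}, $\Delta_{n-1} \supseteq \Gamma(\pi_{n-1})$ contains none -- and $\Omega(\vec\pi_{\leq n-1}) \land \neg\ell$ is satisfiable by the witness of pre-model satisfiability, so $\m{EMPTY}$ accepts the branch there. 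The $\m{PRUNE}$ rule plays no role in this extraction, which concerns the tableau of \cite{GeattiGG22}; that $\m{PRUNE}$ preserves completeness is shown separately.
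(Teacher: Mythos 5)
The paper itself does not prove this proposition: it is imported from \cite{GeattiGG22,GeattiGG22arXiv} (Lem.~2 in Appendix~A of the arXiv version), so there is no in-paper argument to compare against. Your reconstruction is the standard extraction argument for one-pass tree-shaped tableaux and is, as you yourself note, essentially the cited proof: descend through the expansion phase guided by atom condition~(2), which guarantees a child whose label stays inside $\Delta_i$; terminate the phase by a well-founded measure; cross $\m{STEP}$ nodes using pre-model conditions~(2)--(3); obtain $\Delta(\pi_i)\subseteq\Delta_i$ from deductive closure and the converse from minimality (condition~(6)). Your route to the converse inclusion --- substituting $\Delta(\pi_i)$ for $\Delta_i$ and checking the result is still a satisfiable pre-model --- is sound, and you correctly identify the verification of that substitution as the technical core rather than waving it away.

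One genuine (though minor, and inherited) gap: you verify that $\m{CONTRADICTION}$ never fires and that $\m{EMPTY}$ accepts at $\pi_{n-1}$, but not that $\m{EMPTY}$ cannot fire at some earlier $\pi_i$ with $i<n-1$. Since $\m{STEP}$ is only applied when no termination rule is ready, a poised node whose label happens to contain no $\tomorrow$-rooted formula and whose $\Omega(\vec\pi_{\le i})\land\neg\ell$ is satisfiable (e.g.\ for $\phi = p(x)\land\weaktomorrow q(x)$ with a length-2 pre-model) would be accepted there, yielding a branch with fewer than $n$ step nodes. This is really a defect of the proposition as literally stated for non-length-minimal pre-models, not of your argument specifically, and it is harmless for \cref{thm:sound:complete}, where only the existence of \emph{some} accepted branch matters; but a careful write-up should either restrict to the prefix up to the first such $\pi_i$ or observe that early acceptance already gives the desired conclusion.
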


To prove completeness, we have to show that if a formula $\phi$ is satisfiable, there
is an accepted branch. As $\phi$ is satisfiable, it has a model, and by
\cref{prop:sat:pre-model}, there is also a satisfiable pre-model
$\vec\Delta=\langle\Delta_0, \dots, \Delta_{n-1}\rangle$ for $\phi$. Thus, by
\cref{prop:extraction}, there is a branch $\vec \pi = \langle \pi_0, \dots,
\pi_{n-1}\rangle$ in the tableau such that $\Delta(\pi_i) = \Delta_i$ for all $i$,
$0\,{\leq}\,i\,{<}\,n$. It is easy to see that (a prefix of) $\vec\pi$ cannot be
rejected by the $\m{CONTRADICTION}$ rule, as otherwise $\vec\Delta$ would
not be a satisfiable pre-model. However, it remains to show that $\vec\pi$
cannot be rejected by the $\m{PRUNE}$ rule. To this end, we first,
define a \emph{redundant segment} of a pre-model, \ie a segment that can be
safely removed from a satisfiable pre-model to obtain another, shorter,
satisfiable pre-model. Then, we show that if there are no redundant segments,
the tableau branch extracted by \cref{prop:extraction} cannot be
rejected by $\m{PRUNE}$.
To do so, we extend our notion of history constraints to pre-models
in a natural way, that is, given a pre-model $\vec\Delta=\langle\Delta_0,
\dots, \Delta_{n-1}\rangle$, we define $\hist(\vec\Delta)=\hist(\langle
F(\Delta_0),\ldots,F(\Delta_{n-1})\rangle)$.

\begin{definition}[Redundant segment]
  \label{def:redundant}
  Let $\Delta = \langle\Delta_0, \dots, \Delta_{n-1}\rangle$ be a pre-model for $\psi$ and $j<k< n$. Then the subsequence $\vec \Delta_{[j+1, k]}$ is \emph{redundant} if
  $\Delta_j = \Delta_k$ and $\hist(\vec \Delta_{\leq k}) \models_\TT \hist(\vec \Delta_{\leq j})$.
\end{definition}

Intuitively, a redundant segment can be removed from a pre-model because it does
no useful work towards the satisfaction of the formula. To show this, we need an
auxiliary result about history constraints. First, given two state variable assignments
$\alpha$ and $\alpha'$ we define the combination $\combine{\alpha}{\alpha'}$ of
them as a variable assignment with domain $V \cup \nextvars \cup \wnextvars$  by
setting $(\combine{\alpha}{\alpha'})(v) = \alpha(v)$ and
$(\combine{\alpha}{\alpha'})(\nextvar v) = (\combine{\alpha}{\alpha'})(\wnextvar
v) =\alpha'(v)$ for all $v\in V$. That is, $\alpha$ is used to interpret the
current state variables, and $\alpha'$ to interpret the variables at the next
state. Let $\vec C=\seq{C_0, \dots, C_{m-1}}$ be a sequence of first-order
formulas with free variables $V \cup \nextvars \cup \wnextvars$. Given a model
$M$, and a sequence of state variable assignments $\vec \alpha=\seq{\alpha_0,
\dots, \alpha_m}$, we write $M,\vec \alpha \models \vec C$ if $M,
\combine{\alpha_i}{\alpha_{i+1}} \models C_{i}$ for all $0 \leq i < m-1$, and
$M, \combine{\alpha_{m-1}}{\alpha_{m}} \models L(C_{m-1})$. We then have the
following relationship between satisfying assignments for history constraints,
and sequences of assignments that satisfy each constraint in the sequence
individually (similar as  ~\cite[Lemma 3.5]{FMW22}):

\begin{restatable}{lemma}{lemmaHistory}
\label{lem:history}
Let $M$ be a $\Sigma$-structure and $\vec C=\seq{C_0, \dots, C_{m-1}}$ be a sequence of first-order formulas with free variables $\VV \cup \nextvars \cup \wnextvars$, for $m\geq 1$.
\begin{compactenum}
\item[(1)] If $M, \seq{\alpha_0, \dots \alpha_m} \models \vec C$ then
$M, \alpha_m \models h(\vec C)$.
\item[(2)] If $M, \alpha \models h(\vec C)$ then there is a sequence
$\vec \alpha = \seq{\alpha_0, \dots \alpha_m}$ with $\alpha_m=\alpha$ such that $M,\vec \alpha \models \vec C$.
\end{compactenum}
\end{restatable}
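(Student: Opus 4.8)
The plan is to prove both directions by unfolding the definitions of $\Omega$, of the stepped versions $C_i^{(i)}$, and of the operators $L$ and $h$, and then to reduce each statement to a routine check about satisfaction of first-order formulas under renamed/indexed variables. The key bookkeeping device is the correspondence between, on one hand, a sequence of assignments $\seq{\alpha_0,\dots,\alpha_m}$ over $\VV$ and, on the other, a single assignment $\beta$ over the indexed signature $\Sigma'$ defined by $\beta(v^i)=\alpha_i(v)$ for $0\le i\le m$ (and $\beta(w)=\gamma(w)$ for quantification variables $w\in\WW$, which play no essential role here). Under this correspondence, $M,\beta \models C_i^{(i)}$ holds iff $M, \combine{\alpha_i}{\alpha_{i+1}} \models C_i$, because $(\nextvar v)^{(i)} = (\wnextvar v)^{(i)} = v^{i+1}$ matches exactly the definition of $\combine{\alpha_i}{\alpha_{i+1}}$ on $\nextvars\cup\wnextvars$; and similarly $M,\beta \models L(C_{m-1})^{(m-1)}$ holds iff $M, \combine{\alpha_{m-1}}{\alpha_m} \models L(C_{m-1})$. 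Hence $M,\seq{\alpha_0,\dots,\alpha_m}\models\vec C$ is equivalent to $M,\beta \models \Omega(\vec C)$ for the corresponding $\beta$.

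For part~(1): assume $M,\seq{\alpha_0,\dots,\alpha_m}\models\vec C$. By the equivalence above, the assignment $\beta$ with $\beta(v^i)=\alpha_i(v)$ satisfies $\Omega(\vec C)$ in $M$. Now $h(\vec C) = (\exists V^0\dots V^{m-1}.\,\Omega(\vec C))[\vec V^m / \vec\VV]$; we must show $M,\alpha_m \models h(\vec C)$. Take the witnesses for $V^0,\dots,V^{m-1}$ to be $\beta(v^i)=\alpha_i(v)$; then what remains of $\Omega(\vec C)$ after quantification has free variables $\vec V^m$, and after the renaming $[\vec V^m/\vec\VV]$ these become the variables $\VV$, evaluated under $\alpha_m$ (since $\beta(v^m)=\alpha_m(v)$). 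So $M,\alpha_m \models h(\vec C)$, as required. For part~(2): assume $M,\alpha \models h(\vec C)$. Unwinding the renaming, there is an assignment to $V^m$ — namely $v^m \mapsto \alpha(v)$ — together with witnesses $e_i$ for the existentially quantified blocks $V^i$ ($0\le i<m$), such that the resulting $\beta$ satisfies $\Omega(\vec C)$ in $M$. Define $\alpha_i(v) = \beta(v^i) = e_i(v)$ for $i<m$ and $\alpha_m = \alpha$. By the equivalence above, $M,\seq{\alpha_0,\dots,\alpha_m}\models\vec C$, and $\alpha_m=\alpha$ by construction.

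The only genuinely delicate point — and the one I would write out most carefully — is the treatment of the last constraint and the $L$ operator, since $\Omega$ applies $L$ only to $C_{m-1}$ and the constant $\ell$ is \emph{not} stepped. One must verify that the $\Sigma'$-structure can interpret $\ell$ freely (it is a fresh predicate constant, so any truth value is allowed), so that ``$M,\combine{\alpha_{m-1}}{\alpha_m}\models L(C_{m-1})$'' and ``$M,\beta\models L(C_{m-1})^{(m-1)}$'' coincide: the rewriting inside $L$ replaces atoms containing $\nextvars$ by $\ell \land A$ and atoms containing $\wnextvars$ (but not $\nextvars$) by $\ell \to A$, and since $(\nextvar v)^{(m-1)}=(\wnextvar v)^{(m-1)}=v^m$, the stepped atom $A^{(m-1)}$ is evaluated under $\alpha_m$ exactly as $A$ is under $\combine{\alpha_{m-1}}{\alpha_m}$. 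Everything else is a direct induction on term and formula structure, mirroring the argument of \cite[Lemma~3.5]{FMW22}, and I would only sketch it. I also note that the hypothesis $m\ge 1$ is used precisely so that the ``otherwise'' branch of the definition of $h$ applies and $C_{m-1}$ exists.
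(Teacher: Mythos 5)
Your proof is correct, but it takes a genuinely different route from the paper's. The paper proves both items by induction on $m$; the induction step has to relate $\hist(\langle C_0,\dots,C_m\rangle)$ to $\hist(\langle C_0,\dots,C_{m-1}\rangle)$, and since the $L$ operator is applied only to the \emph{last} constraint, this forces an auxiliary manoeuvre: one passes to a structure $M'$ agreeing with $M$ except that $M'\models\ell$, so that $L(C_{m-1})$ in the prefix's history constraint collapses back to $C_{m-1}$. Your argument avoids the induction on $m$ (and hence the $M'$ trick) entirely: you observe that the conjuncts of $\Omega(\vec C)$ are in one-to-one correspondence with the conditions defining $M,\langle\alpha_0,\dots,\alpha_m\rangle\models\vec C$ --- with $L$ applied to exactly the last entry on both sides --- and that under the dictionary $\beta(v^i)=\alpha_i(v)$ each conjunct $C_i^{(i)}$ (resp.\ $L(C_{m-1})^{(m-1)}$) is satisfied by $\beta$ iff $C_i$ (resp.\ $L(C_{m-1})$) is satisfied by $\combine{\alpha_i}{\alpha_{i+1}}$; both directions of the lemma then reduce to the semantics of the existential block and the renaming $[\vec V^m/\vec\VV]$, i.e.\ to a standard substitution lemma proved by structural induction on terms and formulas. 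You correctly flag the two genuinely delicate points (the non-stepped $\ell$ and the interaction of $L$ with $\nextvars$ versus $\wnextvars$), and your handling of them is sound since the same interpretation of $\ell$ is used on both sides of the correspondence. What your approach buys is a shorter and arguably cleaner proof in which the only induction is the routine one on formula structure; what the paper's induction on $m$ buys is a prefix-by-prefix formulation that mirrors how the lemma is invoked later (on prefixes $\vec\Delta_{\le j}$, $\vec\Delta_{\le k}$ in the redundant-segment argument), but this is a stylistic rather than a logical necessity.
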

\begin{proof}
  Both items are shown by a straightforward induction proof (see the 
  Appendix).
\end{proof}

Using \defref{redundant} and \lemref{history}, we can now show that a satisfiable pre-model remains satisfiable after
removing a redundant segment.

\begin{restatable}{lemma}{redundantRemove}
\label{lem:redundant:segments}
Let $\vec \Delta = \langle \Delta_1, \dots, \Delta_{n-1}\rangle$ be a satisfiable pre-model for $\psi$ with redundant segment $\vec \Delta_{[j+1, k]}$.
Then $\vec \Delta' = \vec \Delta_{\leq j}\vec \Delta_{>k}$ is a satisfiable pre-model as well.
\end{restatable}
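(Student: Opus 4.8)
The plan is to show the two defining properties of a satisfiable pre-model for $\vec\Delta'$, namely that it is a legal pre-model in the sense of \defref{pre:model}, and that it is satisfiable in the sense that $\Omega(\langle F(\Delta'_0),\dots\rangle)\wedge\neg\ell$ is $\TT$-satisfiable. The key observation driving both parts is that, since $\Delta_j=\Delta_k$, the sequence $\vec\Delta'=\vec\Delta_{\le j}\vec\Delta_{>k}$ is "locally" a pre-model at the splice point: the temporal consistency conditions (items 2--3 of \defref{pre:model}) only refer to consecutive pairs $\Delta'_i,\Delta'_{i+1}$, and the only new consecutive pair created by the splice is $(\Delta_j,\Delta_{k+1})$, which behaves exactly like the old pair $(\Delta_k,\Delta_{k+1})$ because $\Delta_j=\Delta_k$. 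For the eventualities (items 4--5), I would argue that any $\phi_1\until\phi_2\in\Delta'_i$ with $i\le j$ that was previously fulfilled at some position inside the removed segment must in fact already be fulfilled at some position $\le j$ or $>k$: since $\Delta_j=\Delta_k$, if $\phi_1\until\phi_2\in\Delta_j$ then also $\phi_1\until\phi_2\in\Delta_k$, and by minimality and closure under the expansion rules the obligation is inherited past $k$ where it must still be fulfilled before the end of $\vec\Delta$; the release case is dual. Minimality (item 6) of each $\Delta'_i$ is immediate since the atoms are unchanged, and $\phi\in\Delta'_0$ is immediate as well since $\Delta'_0=\Delta_0$ (as $j\ge 1$ follows from $j<k<n$ and we may assume $j\ge 0$; if $j=0$ we still have $\Delta'_0=\Delta_0$).

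For satisfiability, this is where \lemref{history} does the real work. Let $M$ witness the satisfiability of $\vec\Delta$, so by definition there is a sequence of assignments $\vec\alpha=\langle\alpha_0,\dots,\alpha_{n}\rangle$ with $M,\vec\alpha\models\langle F(\Delta_0),\dots,F(\Delta_{n-1})\rangle$ and additionally $M,\alpha_n\not\models\ell$ (encoding $\neg\ell$ at the end). Split this into the prefix $\vec\alpha_{\le j+1}=\langle\alpha_0,\dots,\alpha_{j+1}\rangle$ and the suffix $\langle\alpha_{k+1},\dots,\alpha_n\rangle$. By part (1) of \lemref{history} applied to the prefix constraint sequence $\langle F(\Delta_0),\dots,F(\Delta_j)\rangle$, we get $M,\alpha_{j+1}\models\hist(\vec\Delta_{\le j})$; more to the point, by the same reasoning applied to $\langle F(\Delta_0),\dots,F(\Delta_k)\rangle$ we get $M,\alpha_{k+1}\models\hist(\vec\Delta_{\le k})$. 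The redundancy hypothesis gives $\hist(\vec\Delta_{\le k})\models_\TT\hist(\vec\Delta_{\le j})$, hence $M,\alpha_{k+1}\models\hist(\vec\Delta_{\le j})$. Now apply part (2) of \lemref{history} to $\langle F(\Delta_0),\dots,F(\Delta_j)\rangle$ with the target assignment $\alpha_{k+1}$: this yields a fresh sequence $\langle\beta_0,\dots,\beta_{j+1}\rangle$ with $\beta_{j+1}=\alpha_{k+1}$ and $M,\langle\beta_0,\dots,\beta_{j+1}\rangle\models\langle F(\Delta_0),\dots,F(\Delta_j)\rangle$. Concatenating $\langle\beta_0,\dots,\beta_j\rangle$ with $\langle\alpha_{k+1},\dots,\alpha_n\rangle$ (these agree at the junction, since $\beta_{j+1}=\alpha_{k+1}$) gives a single assignment sequence witnessing $M,\cdot\models\langle F(\Delta_0),\dots,F(\Delta_j),F(\Delta_{k+1}),\dots,F(\Delta_{n-1})\rangle$ together with $\neg\ell$ at the final instant, which is exactly $\Omega(\langle F(\Delta'_0),\dots,F(\Delta'_{n'-1})\rangle)\wedge\neg\ell$ being $\TT$-satisfiable. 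Here one must be slightly careful that the $L(\cdot)$ wrapper is applied to the last constraint $F(\Delta_{n-1})$ in both the original and the spliced sequence, which is fine since the suffix ending is untouched; and that when $k=n-1$, i.e. the removed segment reaches the end, the argument degenerates gracefully because then the last constraint of $\vec\Delta'$ is $F(\Delta_j)$ and we invoke part (2) with the constraint sequence already terminated by $L$ — this edge case should be handled separately but is routine.

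I expect the main obstacle to be the bookkeeping around the $L$ operator and the last-instant constraint: \lemref{history} as stated is about the sequence $\langle C_0,\dots,C_{m-1}\rangle$ with $L$ applied to $C_{m-1}$, but when I carve out a prefix $\langle F(\Delta_0),\dots,F(\Delta_k)\rangle$ of a longer pre-model, that prefix's last constraint is $F(\Delta_k)$, which in $\Omega(\vec\Delta)$ is \emph{not} wrapped by $L$. So I need the version of \lemref{history} that tracks $\hist$ of such an "interior" prefix, i.e. $\hist(\vec\Delta_{\le k})=(\exists V^0\cdots V^{k}.\ \Omega(\langle F(\Delta_0),\dots,F(\Delta_k)\rangle))[\vec V^{k+1}/\vec\VV]$ where $\Omega$ of this prefix still wraps $F(\Delta_k)$ in $L$ by definition — and this is consistent, because the definition of $\hist$ takes $\Omega$ of the \emph{prefix sequence as its own sequence}, so the $L$ is on $F(\Delta_k)$ there, whereas inside $\Omega(\vec\Delta)$ it is not. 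This mild mismatch means the assignments $\alpha_{k+1},\dots$ from the original model satisfy $\Omega(\vec\Delta)$ but to invoke \lemref{history}(1) on the prefix I need them to satisfy the $L$-wrapped version; this holds because $L(F(\Delta_k))$ is \emph{weaker} than $F(\Delta_k)$ (it only adds $\ell\to\cdot$ guards on $\wnextvar$-atoms and $\ell\wedge\cdot$ on $\nextvar$-atoms, and the original satisfying assignments make $\ell$ hold at position $k$ whenever $k<n-1$ so actually $L(F(\Delta_k))$ and $F(\Delta_k)$ coincide there, while at the spliced boundary the $\nextvar$-atoms are handled by pre-model item 1 only at the very end). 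Carefully verifying this weakening/coincidence and that the reassembled sequence still satisfies each individual constraint at the splice is the fiddly part; everything else is a direct application of \lemref{history} and the definitions.
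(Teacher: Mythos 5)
Your overall strategy coincides with the paper's: verify the pre-model conditions at the splice point using $\Delta_j=\Delta_k$, then apply \lemref{history}(1) to the prefix up to $k$, use the entailment $\hist(\vec\Delta_{\le k})\models_\TT\hist(\vec\Delta_{\le j})$, apply \lemref{history}(2) to rebuild a prefix up to $j$ ending in the same assignment, and concatenate with the untouched suffix. That skeleton is correct and is exactly what the paper does.

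The genuine problem sits in the step you yourself flag as fiddly, and your proposed resolution of it is wrong. You claim that $L(F(\Delta_k))$ is weaker than $F(\Delta_k)$ and that the original satisfying assignments ``make $\ell$ hold at position $k$''. Neither is true: $L$ replaces every atom $A$ containing a term from $\nextvars$ by $\ell\land A$, which is a \emph{strengthening}, and $\ell$ is a single position-independent nullary predicate whose truth value is a property of the structure, not of the assignment at an instant. Since your witness $M$ is taken for $\Omega(\vec\Delta)\land\neg\ell$, you have $M\not\models\ell$, so $L(F(\Delta_k))$ is in fact \emph{false} under $M$ whenever $F(\Delta_k)$ contains a positive $\nextvars$-atom, and your invocation of \lemref{history}(1) on the prefix $\langle F(\Delta_0),\dots,F(\Delta_k)\rangle$ (whose own $\Omega$ wraps $F(\Delta_k)$ in $L$) fails exactly there. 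The paper's fix is small but essential: switch to the structure $M'$ that agrees with $M$ except that $M'\models\ell$. Because $\ell$ is fresh and occurs only inside the $L$-wrapped last conjunct, the interior constraints are unaffected, under $M'$ one has $L(F(\Delta_k))$ equivalent to $F(\Delta_k)$, and both applications of \lemref{history} go through relative to $M'$; one returns to $M$ and $\neg\ell$ only for the final, untouched instant. With that substitution in place of your ``weakening/coincidence'' claim, your argument becomes the paper's proof.
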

\begin{proof}
  See the Appendix.
\end{proof}

It is finally possible to prove the main completeness result.

\begin{thm}[Soundness and completeness] 
  \label{thm:sound:complete}
  Given a \LTLfMT formula $\psi$, the tableau for $\psi$ augmented with the
  $\m{PRUNE}$ rule has an accepted branch if and only if $\psi$ is satisfiable.
\end{thm}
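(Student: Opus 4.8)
The plan is to prove the two directions separately. For soundness (the ``if'' direction is the harder semantic content, but soundness in the sense ``accepted branch $\Rightarrow$ satisfiable'' follows almost for free): since the $\m{PRUNE}$ rule only \emph{rejects} branches and never accepts one, any accepted branch in the augmented tableau is also an accepted branch in the original tableau of \cite{GeattiGG22}. Hence by \cref{prop:tableau} the formula is satisfiable. So the only real work is the converse — completeness — namely that if $\psi$ is satisfiable, the augmented tableau still contains an accepted branch, i.e.\ the branch that was accepted before is not destroyed by an application of $\m{PRUNE}$.

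For completeness, I would proceed as follows. Assume $\psi$ is satisfiable. By \cref{prop:sat:pre-model} it has a satisfiable pre-model; among all satisfiable pre-models for $\psi$, pick one, $\vec\Delta = \langle\Delta_0,\dots,\Delta_{n-1}\rangle$, of \emph{minimal length}. The key claim is that a minimal-length satisfiable pre-model contains \emph{no redundant segment}: indeed, if it did contain a redundant segment $\vec\Delta_{[j+1,k]}$, then by \cref{lem:redundant:segments} the shorter sequence $\vec\Delta_{\leq j}\vec\Delta_{>k}$ would again be a satisfiable pre-model, contradicting minimality. Now apply \cref{prop:extraction} to $\vec\Delta$: every complete tableau for $\psi$ has a branch $\vec\pi = \langle\pi_0,\dots,\pi_{n-1}\rangle$ of step/poised nodes with $\Delta(\pi_i) = \Delta_i$ for all $i$. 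I then argue this branch is accepted in the augmented tableau. It is not rejected by $\m{CONTRADICTION}$: since $\vec\Delta$ is a satisfiable pre-model, $\Omega(\langle F(\Delta_0),\dots,F(\Delta_{n-1})\rangle)\land\neg\ell$ is $\TT$-satisfiable, hence each prefix $\Omega(\vec u)$ is $\TT\cup\EUF$-satisfiable (the $\EUF$ part handled exactly as in \cite{GeattiGG22}). It is not rejected by $\m{PRUNE}$ either: suppose for contradiction $\m{PRUNE}$ fires at some prefix, witnessed by $i < m \leq n$ with $\Gamma(\pi_i) = \Gamma(\pi_{m-1})$ and $\hist(\vec\pi_{\leq m-1}) \models_\TT \hist(\vec\pi_{\leq i})$. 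The subtlety here is the gap between $\Gamma$ and $\Delta$: $\m{PRUNE}$ is stated in terms of node labels $\Gamma$, whereas the pre-model and \defref{redundant} are in terms of atoms $\Delta$. I would close this gap by noting that $\Gamma(\pi_i) = \Gamma(\pi_{m-1})$ implies $\Delta(\pi_i) = \Delta(\pi_{m-1})$ (the atom of a node depends only on its label), hence $\Delta_i = \Delta_{m-1}$, and moreover $\hist(\vec\pi_{\leq i}) = \hist(\vec\Delta_{\leq i})$ and $\hist(\vec\pi_{\leq m-1}) = \hist(\vec\Delta_{\leq m-1})$ because history constraints depend only on the first-order formulas in the poised nodes, which are determined by $F(\Delta_j)$ along the branch. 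Therefore $\vec\Delta_{[i+1, m-1]}$ is a redundant segment of $\vec\Delta$ (taking $j = i$, $k = m-1$, noting $i < m-1$ since the $\m{EMPTY}$/$\m{PRUNE}$ overlap remark rules out $i = m-1$), contradicting that $\vec\Delta$ has no redundant segment. Hence $\vec\pi$ survives to a poised node with no $\tomorrow$-formula and with $\Omega(\vec u)\land\neg\ell$ satisfiable, so $\m{EMPTY}$ accepts it.

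The main obstacle I anticipate is precisely the bookkeeping linking the three representations: the syntactic node labels $\Gamma$ on which $\m{PRUNE}$ is defined, the closure-saturated atoms $\Delta$ on which pre-models and redundancy are defined, and the history constraints computed from the conjunctions $F(\cdot)$ of first-order formulas. One must check carefully that: (i) equal labels give equal atoms and equal $F$-values, so that a $\m{PRUNE}$-collision on the branch is exactly a redundant segment of the extracted pre-model; (ii) the choice of branch from \cref{prop:extraction} really consists of poised/step nodes matching the pre-model atoms, so the indices line up; and (iii) the edge case where the collision index equals $m-1$ is excluded, which is handled by the observation already made in the text that $\m{EMPTY}$ would have triggered earlier on the repeated node. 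Once these alignments are pinned down, combining \cref{prop:sat:pre-model,prop:extraction} with \cref{lem:redundant:segments} via the minimality argument gives completeness, and soundness is immediate, completing the proof.
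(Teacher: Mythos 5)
Your proposal is correct and follows essentially the same route as the paper's proof: reduce to completeness, take a minimal-length satisfiable pre-model, extract the corresponding branch via \cref{prop:extraction}, and derive a contradiction from \cref{lem:redundant:segments} if $\m{PRUNE}$ were to fire. Your extra care in bridging the $\Gamma$/$\Delta$ representations and excluding the $i=m-1$ edge case is sound and, if anything, makes explicit some bookkeeping the paper's proof leaves implicit.
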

\begin{proof}
  As soundness is not affected by the $\m{PRUNE}$ rule, we are only concerned
  with completeness. Hence, suppose $\phi$ is satisfiable. By
  \cref{prop:sat:pre-model} there is a satisfiable pre-model $\vec
  \Delta=\langle \Delta_0\ldots,\Delta_{n-1}\rangle$ for $\phi$. Without loss of
  generality, we can assume that $\vec\Delta$ is of minimal length. By
  \cref{prop:extraction}, the tableau for $\phi$ has a corresponding branch
  $\vec u$ with poised nodes $\vec\pi=\langle \pi_0,\ldots,\pi_{n-1}\rangle$ such
  that $\Delta(\pi_i)=\Delta_i$ for all $0\le i < n$.
  As we mentioned, $\vec u$ cannot have been rejected by the $\m{CONTRADICTION}$
  rule. Now, suppose by contradiction that $\vec u$ has been rejected by the
  $\m{PRUNE}$ rule. Then, there is a node $\pi_i$ with
  $\Gamma(\pi_i)=\Gamma(\pi_n)$ and
  $\hist(\vec\pi)\models_{\TT}\hist(\vec\pi_{\le i})$. But then, we have that
  $\Delta_i=\Delta_n$ and $\hist(\vec\Delta)=\hist(\Delta_{\le i})$. That is,
  $\Delta_{[i,n]}$ is a redundant segment. By \lemref{redundant:segments}, we
  can remove it, obtaining a \emph{shorter} satisfiable pre-model $\vec\Delta_{<
  i}$. But this contradicts the assumption that $\vec\Delta$ was of minimal
  length. Hence, $\vec u$ cannot have been rejected by $\m{PRUNE}$,
  and is thus an accepted branch.
\end{proof}

\section{Decidable fragments}
\label{sec:decidable}

The new $\m{PRUNE}$ rule is not capable of pruning \emph{all} potentially infinite
branches in all possible case, since \LTLfMT is undecidable. However, we can
identify a general sufficient condition for this to happen, given that the underlying theory $\TT$ is decidable (which we assume throughout this section).

\begin{definition}[Finite memory]
  Given an \LTLfMT formula $\phi$, the \emph{history set} of $\phi$ is the set
  of all the formulas $\hist(\vec\Delta_{\le i})$ for any pre-model $\vec\Delta$
  of $\phi$ and any $0\le i< |\vec\Delta|$. A formula $\phi$ has \emph{finite
  memory} if its history set is \emph{finite up to $\TT$-equivalence}.
\end{definition}

\begin{thm}[Termination]
\label{thm:termination}
  The tableau for an \LTLfMT formula with \theproperty is finite.
\end{thm}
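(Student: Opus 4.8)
The plan is to argue by contradiction: suppose the tableau for a finite-memory formula $\phi$ is infinite. Since the tableau is a finitely-branching tree (each expansion rule has at most two children, and the $\m{STEP}$ rule has exactly one), by K\"onig's lemma it must contain an infinite branch $\vec u$. Along this branch, only finitely many consecutive expansion-rule applications can occur between two $\m{STEP}$ applications (the closure $\closure(\phi)$ is finite, and each expansion rule strictly simplifies the multiset of formulas in a way that terminates), so the branch must contain infinitely many poised nodes $\pi_0, \pi_1, \pi_2, \dots$, and hence infinitely many $\m{STEP}$ applications, none of which triggered $\m{CONTRADICTION}$, $\m{EMPTY}$, or $\m{PRUNE}$.

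The core of the argument is then a pigeonhole: the label $\Gamma(\pi_i)$ of each poised node is a subset of the finite set $\closure(\phi)$, so there are only finitely many possible labels. Moreover, I would like to say that the history constraints $\hist(\vec\pi_{\le i})$ also range over a finite set up to $\TT$-equivalence. This is exactly where the \theproperty hypothesis enters. To connect it to the definition, I would use \cref{prop:extraction} together with the connection between poised nodes and atoms: for each prefix $\vec\pi_{\le i}$ of the branch, the sequence $\langle \Delta(\pi_0), \dots, \Delta(\pi_i)\rangle$ is (a prefix of) a satisfiable pre-model-like object --- more precisely, since $\vec u$ has not been rejected by $\m{CONTRADICTION}$, every prefix $\Omega(\langle F(\pi_0), \dots, F(\pi_i)\rangle)$ is $\TT\cup\EUF$-satisfiable, and one checks that $\hist(\vec\pi_{\le i})$ is $\TT$-equivalent to $\hist(\vec\Delta_{\le i})$ for the induced atom sequence, which belongs to the history set of $\phi$. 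Hence all the $\hist(\vec\pi_{\le i})$ lie in the history set, which is finite up to $\TT$-equivalence by hypothesis.

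Now consider the pairs $(\Gamma(\pi_i), [\hist(\vec\pi_{\le i})]_{\equiv_\TT})$ for $i \in \mathbb{N}$. These range over a finite set, so two of them coincide: there exist $i < j$ with $\Gamma(\pi_i) = \Gamma(\pi_j)$ and $\hist(\vec\pi_{\le i}) \equiv_\TT \hist(\vec\pi_{\le j})$. The latter in particular gives $\hist(\vec\pi_{\le j}) \models_\TT \hist(\vec\pi_{\le i})$. But then the $\m{PRUNE}$ rule is applicable at $\pi_j$ (taking the branch prefix up to $\pi_j$ as $\vec\pi$ and the index $i$), so this prefix of $\vec u$ would have been rejected and could not be extended to an infinite branch --- a contradiction. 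Therefore the tableau is finite.

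The main obstacle I anticipate is the bookkeeping in the second paragraph: making the equivalence $\hist(\vec\pi_{\le i}) \equiv_\TT \hist(\vec\Delta_{\le i})$ precise and ensuring that the atom sequence $\langle\Delta(\pi_0),\dots,\Delta(\pi_i)\rangle$ really is (or extends to) a \emph{pre-model} in the sense of \defref{pre:model}, so that its history constraint is genuinely a member of the history set. One subtlety is that a pre-model is a \emph{finite, complete} run whereas here we have an unbounded branch; the clean fix is to observe that the definition of $\hist$ only depends on the finite prefix $\langle F(\Delta_0), \dots, F(\Delta_i)\rangle$ together with the $L$-treatment of the last instant, and that this prefix can always be extended to a genuine satisfiable pre-model precisely because the branch was not rejected --- or, alternatively, to note that $\m{PRUNE}$'s entailment test on $\vec\pi_{\le j}$ versus $\vec\pi_{\le i}$ can be phrased directly in terms of the $\hist$ of branch prefixes, and that the finite-memory hypothesis can be restated (or is intended) to bound exactly those. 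I would pick whichever of these matches the paper's precise definitions, and keep the rest of the argument --- K\"onig's lemma plus pigeonhole plus applicability of $\m{PRUNE}$ --- as the skeleton.
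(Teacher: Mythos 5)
Your proof follows the same route as the paper's: K\"onig's lemma yields an infinite branch with infinitely many poised nodes, the finite-memory hypothesis together with the finiteness of $\closure(\phi)$ gives a pigeonhole on the pairs $(\Gamma(\pi_i),[\hist(\vec\pi_{\le i})]_{\equiv_\TT})$, and the resulting repetition triggers $\m{PRUNE}$, contradicting that the branch is infinite. The bookkeeping subtlety you flag --- whether a prefix of an infinite, non-rejected branch genuinely induces a pre-model so that its history constraint belongs to the history set --- is present in the paper's proof as well, which simply asserts that ``one can check'' that $\langle\Delta(\pi_0),\ldots,\Delta(\pi_i)\rangle$ is a pre-model.
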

\begin{proof}
  As accepted or rejected branches are finite by definition, we are only concerned
  with branches that continue to expand forever without triggering any termination
  rule. Suppose $\phi$ has finite memory 
  but the tableau is infinite.
  Then there is at least one infinite branch since the
  branching degree is finite; let $\vec\pi=\langle\pi_0,\pi_1,\ldots\rangle$ be
  the poised nodes of this branch. For each prefix $\vec\pi_{\le i}$ for
  $i\ge0$, one can check that the sequence $\vec\Delta=\langle
  \Delta(\pi_0),\ldots,\Delta(\pi_i)\rangle$ is a pre-model for $\phi$. Since
  $\phi$ has finite memory, its history set is finite up to $\TT$-equivalence.
  As the possible labels of tableau nodes are also finite, for some $i$ large
  enough there exists a $j<i$ such that $\Gamma(\pi_j)=\Gamma(\pi_i)$ and
  $\hist(\vec\Delta_{\le j})\equiv_{\TT}\hist(\vec\Delta_{\le i})$, which means
  that $\hist(\vec\pi_{\le j})\models_{\TT}\hist(\vec\pi_{\le i})$. Hence the
  $\m{PRUNE}$ rule would apply to $\vec\pi_{\le i}$, contradicting the
  hypothesis that no termination rule is triggering along $\vec\pi$.
\end{proof}

While \thmref{termination} gives only a semantic and, in general, undecidable
condition for termination, we now show several concrete, effectively
identifiable classes of \LTLfMT formulas having finite memory.
Indeed, we use this approach to both re-prove and extend decidability
conditions previously obtained by ad-hoc methods in the literature, and to show
novel results conditions for other relevant classes of formulas. 

Before giving details, we summarise our decidability results.
To this end, let the set of \emph{iteration conditions} 
of an \LTLfMT formula $\phi$ consist of all
literals that occur in $\phi_1$ for any subformula $\phi_1 \until \phi_2$ of $\phi$, or in $\psi_2$ for any subformula $\psi_1 \release \psi_2$ of $\phi$.
We show decidability for the following classes of \LTLfMT formulas:
\smallskip
\begin{compactitem}
\item[(\textsf{NCS})]
Formulae without \emph{cross-state comparisons}, \ie that have no occurrences of $\nextvars{\cup}\wnextvars$, \eg 
$(x{>}y \until x{+}y\,{=}\, 2z) \wedge \always (x{+}y{>}0)$;
\item[(\textsf{FX})]
Formulas where the only temporal operators are $\eventually$, $\tomorrow$, and $\weaktomorrow$, \eg
$\eventually (p(\nextvar x) \wedge \tomorrow(\neg p(x))) \wedge \tomorrow \eventually (r(x,y) \vee r(\nextvar x, y))$;
\item[(\textsf{BL})]
\emph{Bounded lookback} formulas, that generalize the above two by requiring that constraint interaction via $\nextvars$ and $\wnextvars$ is restricted to finitely many configurations, \eg $p(x, \nextvar y) \until (\nextvar x =  x + y)$.
\item[(\textsf{MC})]
Formulas over \LRA where all iteration conditions are \emph{monotonicity constraints}, \ie variable-to-variable or variable-to-constant comparisons. An example is the formula in \exaref{1}.
\item[(\textsf{IPC})]
Formulas over \LIA where all iteration conditions are \emph{integer periodicity constraints}, \eg $(y \equiv_3 x) \until (x > 42) \wedge \eventually (x{+}y=z)$.
\end{compactitem}
\smallskip

Demri and d'Souza~\cite{DD07,Demri06} showed that satisfiability is decidable for \LTLfMT over arithmetics where \emph{all} literals are monotonicity or integer periodicity constraints, but our results \textsf{(MC)} and \textsf{(IPC)} show that is suffices to restrict the shape of iteration conditions respectively.
To the best of our knowledge, the result \textsf{(FX)} is novel; and \textsf{(BL)} is novel as a decidability result for satisfiability, though a similar result is known for model checking over \LTLf with arithmetic~\cite{FMW22}, and for the more restrictive condition of \emph{feedback freedom} also supporting the theory \EUF~\cite{DDV12}.
In the remainder of this section, we formally prove decidability for the five classes above.

We start with \emph{bounded lookback} formulas. To formally define this class of
formulas, we use the structure of a \emph{dependency graph} to capture the
dependencies between variables induced by a pre-model.
\begin{definition}[Dependency graph]
  \label{def:depenency:graph}
  Let $\vec \Delta = \langle \Delta_0, \dots, \Delta_{n-1} \rangle$ be a
  pre-model. Its \emph{dependency graph} is $\DG(\vec \Delta) = (\VV^{\leq n},
  E^=, E^{\neq})$ where $\VV^{\leq n} = V^0 \cup \dots \cup V^{n}$ is the set of
  nodes, and $E^=$ and $E^{\neq}$ are sets of two kinds of edges defined as follows.

  Two variables $x,y\in \VV^{\leq n}$ are \emph{dependent} if there is a
  sequence of variables $z_0,z_1, \dots, z_m\in\WW$ such that
  $\Omega(\vec\Delta)$ contains a literal $\ell_0$ mentioning $x$ and $z_0$, a
  literal $\ell_m$ mentioning $z_m$ and $y$, and, a literal $\ell_i$ that
  mentions both $z_i, z_{i+1}$ for all $1 \leq i < m$. In this case:
  \begin{compactitem}
  \item
    $(x,y)\in E^=$ if all the literals $\ell_i$ are equalities;
  \item
    $(x,y)\in E^{\neq}$ if at least one $\ell_i$ is not an equality.
  \end{compactitem}
\end{definition}

In other words, $E^=$ is the smallest equivalence relation on $\VV^{\leq n}$
that contains the transitive closure of all equality literals in
$\Omega(\vec\Delta)$, while $E^{\neq}$ captures connections by arbitrary other
kinds of literals. Moreover, let $\DG_=(\vec \Delta) = (\VV^{<n}, E^{\neq})$ be
the graph obtained from $\DG(\vec \Delta)$ by collapsing all equality edges to
an arbitrary element in the equivalence relation induced by $E^=$.

\begin{definition}
For $k \geq 0$, an \LTLfMT formula $\psi$ has \emph{$k$-bounded lookback} if for all pre-models $\vec \Delta$ of $\psi$, it holds that all acyclic paths in $\DG_=(\vec \Delta)$ have length at most $k$.
\end{definition}

A formula has \emph{bounded lookback} (BL) if it has $k$-bounded lookback for some $k$. The notion is an adaptation of a similar property used in model checking~\cite{FMW22}; and as shown there, it generalizes the notion of \emph{feedback freedom}~\cite{DDV12} developed to verify database systems.
Intuitively, bounded lookback expresses that in order to check whether a run satisfies $\phi$, it suffices to remember a bounded amount of information from past states. The next examples illustrate the idea.

\begin{figure}[t]
  \resizebox{\linewidth}{!}{
    \begin{tikzpicture}[xscale=.6, yscale=1.2]
      \node[scale=.7] at (-1,.5) {$x$};
      \node[scale=.7] at (-1,.1) {$y$};
      \foreach \i in {0,1,2,3,4} {
        \node[scale=.65] at (\i,1) {\i};
        \node[fill, circle, inner sep=0pt, minimum width=1mm] (x\i) at (\i,.5) {};
        \node[fill, circle, inner sep=0pt, minimum width=1mm] (y\i) at (\i,.1) {};
        }
      \draw (x0) -- (y1);
      \draw (x1) -- (y2);
      \draw (x2) -- (y3);
      \draw (x3) -- (x4) -- (y3) -- (x3);

      \begin{scope}[xshift=5.5cm]
        
        \foreach \i in {0,1,2,3} {
          \node[scale=.65] at (\i,1) {\i};
          \node[fill, circle, inner sep=0pt, minimum width=1mm] (x\i) at (\i,.5) {};
          \node[fill, circle, inner sep=0pt, minimum width=1mm] (y\i) at (\i,.1) {};
          }
        \draw (y0) -- (y1) -- (y2) -- (y3);
        \draw (x0) -- (x1) -- (x2) -- (x3);
        \draw[dotted, line width=0.8pt] (x2) -- (y2);
        \begin{scope}[xshift=4.5cm]
        
        \foreach \i in {0,1,2,3} {
          \node[scale=.65] at (\i,1) {\i};
          \node[fill, circle, inner sep=0pt, minimum width=1mm] (x\i) at (\i,.5) {};
          \node[fill, circle, inner sep=0pt, minimum width=1mm] (y\i) at (\i,.1) {};
          }
        \draw (y0) -- (y1) -- (x2) -- (y3);
        \draw (x0) -- (x1) -- (x2) -- (x3);
        \end{scope}
      \end{scope}
    \end{tikzpicture}
  }
  \caption{Dependency graphs for the formulas in \exaref{dependency1} (left) 
  and \exaref{dependency2} (center and right). Equality edges are drawn dotted 
  and other edges solid.}
  \label{fig:dependency}
\end{figure}
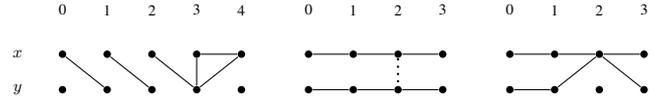

\begin{example}
  \label{exa:dependency1}
For $\phi = p(x, \nextvar y) \until (\nextvar x =  x + y)$
consider the pre-model
$\vec \Delta = \langle \Delta_0, \Delta_0, \Delta_0, \Delta_1\rangle$
where
$\Delta_0 = \{p(x, \nextvar y), \tomorrow\phi\}$ and
$\Delta_1 = \{\nextvar x =  x + y\}$. We have:
\begin{equation}
\Omega(\vec\Delta) = p(x_0, y_1) \land p(x_1, y_2) \land p(x_2, y_3)
\land x_4 =  x_3 + y_3
\end{equation}
Then, $\DG(\vec\Delta)$ is pictured in \figref{dependency} (left), representing all the connections between the variables $x_0,y_0, \dots, x_4,y_4$ implied by $\Omega(\vec \Delta)$.

Since there are no equality literals, $\DG_=(\vec\Delta)$ coincides with $\DG(\vec\Delta)$. The longest acyclic path in $\DG_=(\vec\Delta)$ has length 3.
Though $\phi$ has infinitely many pre-models, it can be seen that in all their DGs, acyclic paths have length $\leq 3$, so
$\phi$ has 3-bounded lookback.
\end{example}

\begin{example}
\label{exa:dependency2}
For the pre-model $\vec \Delta = \langle \Delta_0, \Delta_1,
\Delta_2\rangle$ for $\psi$ from \exaref{1}, where 
$\Delta_0 = \{\psi, \psi', x\,{<}\, 0,
y\,{=}\,1, \nextvar y\,{>}\,y, \nextvar x\,{\leq}\,x, \tomorrow\psi'\}$,
$\Delta_1 = \{\nextvar y\,{>}\,y, \nextvar x\,{\leq}\,x, \psi',\tomorrow\psi'\}$, and
$\Delta_2 = \Delta_1 \cup\{x=y\}$, we have
\begin{align}
\Omega(\vec\Delta) = {} & x_0\,{<}\,0 \land y_0\,{=}\,1 \land
y_1 \,{>}\, y_0 \land x_1\,{\leq}\, x_0 \land 
y_2\,{>}\, y_1  \\
{} \land {} & x_2\,{\leq}\, x_1 \land 
y_3\,{>}\,y_2 \land x_3\,{\leq}\,x_2 \land x_2\,{=}\,y_2
\end{align}
\figref{dependency} shows $\DG(\vec\Delta)$ (center) and $\DG_=(\vec\Delta)$ (right). The longest path in $\DG_=(\vec\Delta)$ has length 4. However, $\psi$ has infinitely many pre-models
$\vec\Delta_m = \langle \Delta_0, \Delta_1, \dots,\Delta_1,\Delta_2\rangle$
with $m$ repetitions of $\Delta_1$, for any $m\geq 0$, which have similar DG$_=$'s with paths of length $2(m+1)$. So $\psi$ does not have $k$-bounded lookback, for any $k$.
\end{example}

The proof of the following result recasts the approach from \cite[Thm. 5.10]{FMW22} for pre-models and satisfiability.

\begin{thm}
\label{thm:bounded:lookback}
Satisfiability of BL formulas is decidable.
\end{thm}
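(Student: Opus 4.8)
The plan is to derive decidability from \thmref{termination}: we show that every bounded-lookback formula has \theproperty, so the $\m{PRUNE}$-augmented tableau for it is finite. Concretely, the decision procedure for a BL formula $\phi$ just constructs this tableau. By \thmref{sound:complete} the calculus is sound and complete, and since $\TT$ is decidable every node label lies in the finite set $2^{\closure(\phi)}$, every expansion step is effective, and the side conditions of the $\m{EMPTY}$, $\m{CONTRADICTION}$ and $\m{PRUNE}$ rules reduce to ($\TT$-)satisfiability and to $\TT$-entailment between (quantified) history constraints, all of which are decidable. Hence $\phi$ is satisfiable \ie the tableau contains an accepted branch, and it remains only to prove that the tableau is finite, which by \thmref{termination} follows once we know that the history set of $\phi$ is finite up to $\TT$-equivalence.

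To bound the history set we prove, in the spirit of \cite[Thm.~5.10]{FMW22} but recast for pre-models, a structural lemma: if $\phi$ has $k$-bounded lookback, then there is a finite set $S_\phi$ of first-order formulas such that for every pre-model $\vec\Delta$ of $\phi$ and every $i$ the history constraint $\hist(\vec\Delta_{\le i})$ is $\TT$-equivalent to some member of $S_\phi$. Each formula in $S_\phi$ has the shape $\exists \vec z.\,\bigwedge_j \lambda_j$, where $\vec z$ is drawn from a fixed finite pool of fresh variables, the number of conjuncts is bounded by a function of $k$, $|\VV|$ and $|\phi|$, and each $\lambda_j$ is obtained from one of the finitely many literals occurring in $\phi$ by renaming its $\VV$-, $\nextvars$- and $\wnextvars$-variables into the pool together with the free variables $\VV$ and the atom $\ell$. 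Since the pool and the set of literal shapes are finite, $S_\phi$ is finite up to renaming, hence finite up to $\TT$-equivalence; this yields \theproperty and closes the argument.

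The proof of the structural lemma goes through the dependency graph. Recall that $\hist(\vec\Delta_{\le i})$ is, up to renaming of the last layer to $\VV$, the existential projection of $\Omega(\vec\Delta_{\le i})$ onto the variables $\vec V^i$ (with $\ell$ kept free via the $L$ operator). The claim is that only a bounded-size ``core'' of $\Omega(\vec\Delta_{\le i})$ survives this projection up to $\TT$-equivalence, and we read this off $\DG_=(\vec\Delta_{\le i})$: the nodes relevant to $\vec V^i$ are those reachable from the last layer, $k$-bounded lookback caps the length of acyclic paths there, and because equality-connected variables collapse to a single node of $\DG_=$, the subgraph that can couple the last-layer variables to one another consists of boundedly many equivalence classes. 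Thus the quantified variables effectively range over a bounded set and are constrained by boundedly many literal instances, which can be reindexed into the fixed fresh pool to give a formula of $S_\phi$.

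The main obstacle is precisely this structural lemma, and within it the treatment of \emph{cyclic} dependencies: a bound on the length of \emph{acyclic} paths in $\DG_=(\vec\Delta)$ does not by itself bound the sizes of strongly connected components, nor the number of literal instances surviving the existential projection, so one has to argue --- as in the feedback-freedom / bounded-lookback analyses of \cite{FMW22,DDV12} --- that a large repetitive or cyclic block can be ``folded'' into a bounded summary without changing its effect on the last layer. A secondary point to get right is effectiveness: computing and comparing history constraints involves quantifiers, so it is decidability of the \emph{full first-order} theory $\TT$ (as for \LRA and \LIA) that makes the $\m{PRUNE}$ check, and hence the tableau construction, algorithmic; the remaining bookkeeping --- extracting pre-models from branches, and the length-minimality argument --- is exactly as in \thmref{termination} and \thmref{sound:complete}.
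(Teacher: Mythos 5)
Your overall strategy is exactly the paper's: reduce to \thmref{termination} by showing that $k$-bounded lookback forces the history set to be finite up to $\TT$-equivalence, reading the bound off $\DG_=(\vec\Delta)$; the remarks on effectiveness of the rule checks and on soundness/completeness are correct and match \thmref{sound:complete} and the standing decidability assumption of \cref{sec:decidable}. The problem is the one you flag yourself: you state the structural lemma (every $\hist(\vec\Delta_{\le i})$ is $\TT$-equivalent to a member of a finite set $S_\phi$) but do not prove it, deferring the treatment of cyclic or repetitive blocks to the literature. Since that lemma is the entire mathematical content of the theorem, the proposal as written is incomplete at its crux.

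The paper closes this step without any ``folding'' of cyclic blocks, in two short moves. First, rewrite $\hist(\vec\Delta)$ into $\chi$ by deleting the variable-to-variable equality literals and replacing each variable by a representative of its $E^=$-class; this preserves $\TT$-equivalence, and $\chi$ is now a conjunction of renamed literals of $\phi$ whose variable-cooccurrence structure is exactly $\DG_=(\vec\Delta)$. Second, quantified variables not connected to the free variables (the last layer, renamed to $\VV$) contribute only a closed subformula that can be split off, so up to equivalence only the variables connected to $V^n$ remain, and the $k$-bound on acyclic paths in $\DG_=(\vec\Delta)$ caps these at $k\cdot|\VV|$. Once the variable set is bounded, your worry about ``the number of literal instances surviving the projection'' evaporates on its own: there are only finitely many instantiations of the finitely many literal shapes of $\phi$ over $k\cdot|\VV|+|\VV|$ variables, so repeated conjuncts collapse and $\chi$ ranges over a set of formulas that is finite up to logical equivalence --- no quantifier elimination and no analysis of strongly connected components is needed. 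In short, the missing idea is that finiteness up to equivalence over a \emph{bounded variable set and finite literal vocabulary} is automatic; you were looking for a semantic summarisation argument where a purely syntactic counting argument suffices.
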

\begin{proof}
Let $\phi$ have $k$-bounded lookback, and $\vec \Delta$ a pre-model of length $n$ for $\phi$.
The history constraint $h(\vec \Delta)$ encodes
$\DG(\vec \Delta)$. Let $\chi$ be the formula obtained from  $h(\vec \Delta)$ by removing all equalities between variables and replacing each variable in $\VV^{\leq n}$ by
a representative from its $E^=$-equivalence class.
Then $\chi \equiv_\TT h(\vec \Delta)$ and $\chi$ encodes $\DG_=(\vec \Delta)$.
Since all acyclic paths in $\DG_=(\vec \Delta)$ have length at most $k$,
each variable in $V^n$ is connected in $\DG_=(\vec \Delta)$ to at most $k$ variables in $V^{\leq n}$.
As $\chi$ encodes $\DG_=(\vec \Delta)$, $\chi$ is equivalent
to a formula with at most $k\cdot |\VV|$ quantified variables.
All literals in $\chi$ are (renamed) first-order formulas in $\phi$.
The number of formulas with a bounded number of quantifiers and finite vocabulary is finite up to equivalence, so $\psi$ has \theproperty, and by \thmref{termination}, the tableau is finite.
\end{proof}

Note that for a given $k$ and \LTLfMT formula $\psi$, it is decidable whether $\psi$ has $k$-bounded lookback, by checking whether none of the finitely many (prefixes of) pre-models of length $k+1$ has a path in DG$_=$ of length more than $k$ (\cf \cite{FMW22}).
However, it is undecidable whether there is \emph{some} $k$ such that $\psi$ has $k$-bounded lookback.

Let a formula have \emph{cross-state comparisons} if it contains variables in
$\nextvars$ or $\wnextvars$. Note that for formulas without cross-state
comparisons, dependency graphs have only edges from some $x_i$ to some $y_i$ for
the same $i$ (\ie vertical edges if pictured as in \figref{dependency}), so
all acyclic paths have length at most $|\VV|$. We hence obtain the following:

\begin{corollary}
Satisfiability of formulas without cross-state comparisons is decidable.
\end{corollary}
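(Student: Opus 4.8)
The plan is to show that every \LTLfMT formula without cross-state comparisons has $|\VV|$-bounded lookback, so that the statement follows immediately from \thmref{bounded:lookback}. First I would note that the closure operation preserves the absence of cross-state comparisons: if $\phi$ contains no variable from $\nextvars \cup \wnextvars$, then neither does any formula in $\closure(\phi)$, since the only formulas added beyond subformulas of $\phi$ are $\tomorrow(\phi_1 \until \phi_2)$ and $\weaktomorrow(\psi_1 \release \psi_2)$, which introduce no new terms. Hence, for any pre-model $\vec\Delta = \langle \Delta_0, \dots, \Delta_{n-1}\rangle$ of $\phi$, each atom $\Delta_i \subseteq \closure(\phi)$ is free of cross-state variables, so each first-order conjunction $F(\Delta_i)$ mentions only data variables from $\VV$ (apart from quantified variables in $\WW$). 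Unfolding the definition of $\Omega$, the stepped conjunct $F(\Delta_i)^{(i)}$ then mentions indexed data variables only from the single column $V^i$, and the $L$ operator leaves $F(\Delta_{n-1})$ unchanged since it acts only on atoms containing cross-state terms; thus $\Omega(\vec\Delta) = \bigwedge_{i=0}^{n-1} F(\Delta_i)^{(i)}$, with the $i$-th conjunct involving data variables from $V^i$ only.

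The second step is to conclude that all edges of $\DG(\vec\Delta)$ are ``vertical''. Every literal of $\Omega(\vec\Delta)$ lies inside a single conjunct $F(\Delta_i)^{(i)}$ and so mentions indexed data variables from $V^i$ only; moreover, renaming bound variables apart across conjuncts, the $\WW$-variables of distinct conjuncts are disjoint, so any dependency chain $\ell_0, \dots, \ell_m$ linked through $z_0, \dots, z_m \in \WW$ stays within one conjunct. Hence whenever $(x,y) \in E^= \cup E^{\neq}$, the variables $x$ and $y$ belong to the same column $V^i$; collapsing equality classes only merges variables within a column, so the same property carries over to $\DG_=(\vec\Delta)$. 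Consequently an acyclic path in $\DG_=(\vec\Delta)$ stays within one collapsed column, which has at most $|\VV|$ vertices, and therefore has length at most $|\VV|$. Since this bound is uniform over all pre-models of $\phi$, the formula has $|\VV|$-bounded lookback, and \thmref{bounded:lookback} gives decidability.

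The argument is essentially routine; the only point that needs care is the dependency-chain step, where one must ensure that the quantified $\WW$-variables of different conjuncts of $\Omega(\vec\Delta)$ are not accidentally identified — handled by the standard convention of renaming bound variables apart. With that in place, no genuine obstacle remains, and in fact the whole proof reduces to the observation already anticipated in the text preceding the statement, namely that dependency graphs of cross-state-free pre-models contain only edges between variables carrying the same time index.
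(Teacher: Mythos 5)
Your proof is correct and follows exactly the route the paper takes: it observes that without $\nextvars\cup\wnextvars$ every literal of $\Omega(\vec\Delta)$ lives in a single column $V^i$, so the dependency graph has only vertical edges, acyclic paths in $\DG_=(\vec\Delta)$ have length at most $|\VV|$, and decidability follows from \thmref{bounded:lookback}. The extra care you take with renaming the bound $\WW$-variables apart is a reasonable elaboration of a point the paper leaves implicit.
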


Now, let an \LTLfMT formula be an \emph{$\m{FX}$ formula} if its only temporal operators are $\eventually$, $\tomorrow$, and $\weaktomorrow$.

\begin{thm}
\label{thm:FX}
Satisfiability of $\m{FX}$ formulas is decidable.
\end{thm}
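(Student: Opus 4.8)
The plan is to show that every $\m{FX}$ formula $\phi$ has \theproperty and then invoke \thmref{termination}. The whole argument hinges on one structural feature of the fragment: among the operators $\eventually$, $\tomorrow$, $\weaktomorrow$ allowed in an $\m{FX}$ formula, the only one whose tableau expansion can \emph{loop} is $\eventually$, via $\eventually\chi \equiv \top\until\chi$, and the body of that loop carries no first-order content. Indeed, the \textsf{UNTIL} rule sends $\top\until\chi$ either to $\{\chi\}$ or to $\{\top,\,\tomorrow(\top\until\chi)\}$, and in the postponing branch the only new formulas are $\top$ — which is not a first-order formula and hence never contributes to $F(\Delta_i)$ — and $\tomorrow\eventually\chi$. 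Everything else in an $\m{FX}$ formula is ``unrolled'' only a number of times bounded by the size of $\phi$.

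The first and main step is the following lemma: \emph{there is a constant $K_\phi$, depending only on $\phi$, such that for every pre-model $\vec\Delta = \langle\Delta_0,\dots,\Delta_{n-1}\rangle$ of $\phi$ one has $F(\Delta_i) \equiv_\TT \top$ for all but at most $K_\phi$ indices $i$}. I would prove it by induction on the temporal nesting structure of $\phi$: every subformula $\psi$ of $\phi$ occurs in at most $c_\psi$ atoms of any pre-model, with $c_\psi$ depending only on $\phi$ — \emph{except} when $\psi = \eventually\chi$, in which case $\psi$ itself may occur in arbitrarily many consecutive atoms, but its argument $\chi$ still occurs in at most $c_\psi$ atoms. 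The base case is $\phi\in\Delta_0$; the Boolean and $\tomorrow/\weaktomorrow$ cases push the bound to the immediate subformulas; and for $\eventually\chi$ one counts the ``fresh introductions'' of $\eventually\chi$ along the pre-model, each triggered either by $\Delta_0$ or by the placement of a formula exposing $\tomorrow\eventually\chi$ (a bounded event, by induction), where each such instance is fulfilled at a single index by \defref{pre:model}(4) and minimality, placing $\chi$ exactly once. Since $F(\Delta_i)\not\equiv_\TT\top$ forces some first-order subformula $\lambda$ of $\phi$ into $\Delta_i$, and there are finitely many such $\lambda$, one takes $K_\phi = \sum_\lambda c_\lambda$.

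Given the lemma, finite memory follows: for any pre-model $\vec\Delta$ and index $i$, replacing each $\TT$-trivial $F(\Delta_j)$ by $\top$ leaves $\hist(\vec\Delta_{\le i})$ unchanged up to $\TT$-equivalence, so $\hist(\vec\Delta_{\le i})$ is $\TT$-equivalent to an existential quantification of a conjunction of at most $K_\phi + 1$ ``blocks'', each a stepped copy of one of the finitely many possible $F(\Delta_j)$, the only further relevant data being which consecutive blocks sit at adjacent levels and which block touches level $i$ (the free variables). There are only finitely many such formulas up to $\TT$-equivalence, so the history set of $\phi$ is finite up to $\TT$-equivalence, \ie $\phi$ has \theproperty, and \thmref{termination} yields decidability. (Alternatively, essentially the same lemma shows $\m{FX}$ formulas have bounded lookback: a $\TT$-trivial level contributes no edge to the dependency graph, so every connected component of $\DG_=(\vec\Delta)$ is confined to a run of at most $K_\phi + 2$ consecutive levels and hence to $O(K_\phi\cdot|\VV|)$ variables, whence \thmref{bounded:lookback} applies.)

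The step I expect to be the real obstacle is the lemma — turning the slogan ``$\eventually$ is the only looping operator and its body is trivial'' into a proof. The delicate points are nested eventualities (a postponed $\eventually\chi$ whose body contains further occurrences of $\eventually$), disjunctions inside a fulfilled $\eventually$ that force a choice of disjunct, and subformulas occurring syntactically several times in $\phi$; one must also pin down exactly what the closure condition on pre-model atoms is allowed to add to a $\Delta_i$, so that the inductive counting really bounds the occurrences. None of this is deep, but it needs to be set up with care; once the lemma is in place, the passage to \theproperty (or to bounded lookback) is routine.
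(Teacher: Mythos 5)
Your proposal is correct and rests on the same key observation as the paper's proof: since $\eventually\chi$ unfolds as $\top\until\chi$ and the postponing branch contributes only $\top$ and $\tomorrow(\top\until\chi)$, every literal occurrence of $\phi$ corresponds (by minimality of atoms) to at most one occurrence in any pre-model, so only boundedly many positions carry first-order content. The paper then concludes exactly as in your parenthetical alternative — the pre-model contains at most $m$ literals overall, so every path in the dependency graph has length at most $m\cdot|\VV|$, giving bounded lookback and an appeal to \thmref{bounded:lookback} — whereas your primary route establishes \theproperty directly; both are valid, and the paper leaves the counting lemma at the same informal level of detail that you flag as the delicate step.
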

\begin{proof}
Suppose an $\m{FX}$ formula $\phi$ contains $m$ literals, and let
$\vec \Delta = \langle \Delta_0, \dots, \Delta_{n-1}\rangle$ be a pre-model for it.
By the expansion rules of the $\eventually$, $\tomorrow$, and $\weaktomorrow$ operators, and the minimality of atoms, every literal occurrence in $\phi$ corresponds to at most one occurrence in the pre-model.
Thus, $\vec \Delta$ contains at most $m$ literals overall, and each path in its
dependency graph is upper-bounded by $m\cdot |\VV|$, hence $\phi$ has bounded
lookback. The claim then follows from \thmref{bounded:lookback}.
\end{proof}

We next consider fragments of \LTLfMT over arithmetic theories.
\emph{Monotonicity constraints} (MC) restrict linear arithmetics over the
rationals, demanding all constraints to be of the form $p \odot q$ where $p,q\in
{\mathbb Q\,{\cup}\,\VV\,{\cup}\,\nextvars\,{\cup}\,\wnextvars}$ and $\odot\in\{=, \neq, \leq, <\}$. An \LTLfMT formula $\phi$ is an \emph{MC
formula} if all literals in $\phi$ are MCs, such as in the formula from \exaref{1}.
Satisfiability of MC formulas is known to be decidable \cite[Cor. 5.5]{DD07}.
Here, we prove decidability for a larger class.
\begin{definition}[Quasi-MC formulas]
  \label{def:quasi-MC}
  An \LTLfMT formula over the signature of \LRA is \emph{quasi-MC} if all its \emph{iteration conditions} are MCs.
\end{definition}

E.g., $(\nextvar x\,{>}\,x \wedge \nextvar y\,{>}\,y) \until (x{+}y\,{>}\,10)$
is not an MC-, but a quasi-MC formula.
MC formulas are important in BPM, as they can model decision tables~\cite{deLeoniFM21Jods}.
To show decidability of quasi-MC formulas, we use the following fact about quantifier elimination~\cite[Sec. 5.4]{KS16}:
if $\phi$ is an \LRA formula where all literals are MCs over a set of constants $\mc K$ and variables $X \cup \{x\}$,
then one can compute a formula $\phi' \equiv_{\m{LRA}} \exists x.\, \phi$ such that all literals in $\phi'$ are MCs over constants $\mc K$ and variables $X$; \eg using a Fourier-Motzkin procedure. 

\begin{thm}
\label{thm:MCs}
Satisfiability of quasi-MC formulas is decidable.
\end{thm}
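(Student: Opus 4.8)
The plan is to show that quasi-MC formulas have finite memory, and then apply \thmref{termination}. The key observation is that even though a quasi-MC formula may contain arbitrary \LRA literals outside of iteration conditions, only the iteration conditions get copied repeatedly along a pre-model: all other literals come from subformulas that are ``consumed'' after finitely many steps (bounded by the nesting depth of temporal operators), much as in the proof of \thmref{FX}. More precisely, given a pre-model $\vec\Delta = \langle\Delta_0,\dots,\Delta_{n-1}\rangle$ for a quasi-MC formula $\phi$, I would argue that $\Omega(\vec\Delta)$ splits into a ``bounded part'' (the instances of non-iteration-condition literals, of which there are at most a constant number independent of $n$) and an ``unbounded part'' consisting solely of stepped instances of MCs.

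The main technical step is then to compute $\hist(\vec\Delta_{\le i})$ by quantifier elimination, tracking the shape of the result. I would proceed by induction on $i$. The crucial point is that the existentially quantified variables to be eliminated, $V^0,\dots,V^{i-1}$, are connected to the free variables $\vec\VV$ (the renamed $\vec V^i$) only through MC literals, since any non-MC literal only mentions variables of a bounded window of time points near where the corresponding subformula was expanded. Hence, while eliminating the quantified variables one at a time using the Fourier–Motzkin-style procedure cited above (\cite[Sec.~5.4]{KS16}), the MC literals among them stay MCs, and at each elimination step the number of fresh MC literals produced is bounded by the square of the current number of MC literals. The subtlety is that this number could a priori grow with $i$; to control it, I would use that all the MC literals are over the \emph{fixed} finite set of constants $\mc K$ appearing in $\phi$ together with the variables $\vec\VV$ (after renaming) — and the number of MCs over a fixed finite vocabulary is finite up to \LRA-equivalence. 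Therefore the MC part of $\hist(\vec\Delta_{\le i})$, after simplification, can take only finitely many forms up to \LRA-equivalence. Combined with the boundedly many non-MC literals, $\hist(\vec\Delta_{\le i})$ ranges over a finite set up to $\equiv_{\m{LRA}}$, so $\phi$ has finite memory and \thmref{termination} applies.

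The hard part will be making rigorous the claim that the non-iteration-condition literals contribute only a bounded number of atoms to $\Omega(\vec\Delta)$, and that during quantifier elimination these ``bad'' (non-MC) literals do not interact with the quantified MC variables in a way that blows up the vocabulary. For the first point I would rely on the expansion rules: a literal $a$ appearing in $\phi$ but not inside the left argument of an \texttt{UNTIL} or the right argument of a \texttt{RELEASE} can appear in $\Delta_i$ only if $\tomorrow^i$ or $\weaktomorrow^i$ applied to (the subformula containing) $a$ reached $\Delta_i$, which happens for at most one $i$ per occurrence — identical in spirit to the argument in \thmref{FX}. For the second point, I would argue that when eliminating a quantified variable $v^j$, only MC literals mention $v^j$ (the non-MC literals live in time windows not containing index $j$ once $j$ is outside the initial bounded segment), so Fourier–Motzkin acts purely on MCs and yields only MCs; the boundedly many non-MC literals are simply carried along unchanged. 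Once these two facts are in place, the finiteness-up-to-equivalence argument is routine, and the theorem follows from \thmref{termination}.
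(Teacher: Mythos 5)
Your proposal is correct and follows essentially the same route as the paper: non-iteration-condition literals contribute at most a bounded number of conjuncts at a bounded set of indices, the remaining (MC) literals are removed by MC-preserving quantifier elimination over the fixed vocabulary of constants $\mc K$ and the surviving variables, and finiteness of MCs up to $\equiv_\LRA$ yields finite memory and hence termination via \thmref{termination}. The only slip is the phrase ``initial bounded segment'': the indices carrying non-MC literals need not be initial (an eventuality may be fulfilled arbitrarily late), but your argument only uses that this set of indices --- and hence the set of variables those literals touch, which the paper simply leaves existentially quantified rather than eliminating --- has size bounded in terms of the formula alone, so nothing breaks.
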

\begin{proof}
Let $\mc K$ be the set of constants, $I$ the set of iteration conditions,
$A$ the set of all first-order formulas in a quasi-MC formula
$\phi$, and $m$ the number of occurrences of formulas of $A$ in $\phi$. For a
pre-model $\vec \Delta = \langle \Delta_0, \dots, \Delta_{n-1}\rangle$, let 
$J = \{i_1, \dots, i_k\} \subseteq \{0, \dots, n{-}1\}$ be all indices
such that $F(\Delta_{i_j})$ contains a formula in $A \setminus I$.
\emph{W.l.o.g.}, assume that $n-1 \in J$; otherwise the reasoning is
similar. Note that $k \leq m$ since every occurrence of a first-order formula
in $\phi$ that is not an iteration condition can occur in at most one atom in a
pre-model. Now, $\Omega(\vec \Delta)$ has free variables $\VV^{\leq n} =
V^0 \cup \dots V^n$; let $X \subseteq \VV^{\leq n}$ be the set of variables occurring in
$\{F(\Delta_{j})^{(j)} \mid j\in J\}$, and $Y = \VV^{\leq n} \setminus
X$. Then we can write $h(\vec \Delta)$ as 
\begin{equation}
\biggl(\exists X. \bigl (\exists Y. \bigwedge_{i\in N\setminus J} C_i^{(i)} \bigr) \wedge \bigwedge_{i\in J\setminus \{m-1\}} C_i^{(i)} \wedge L(C_{m-1})^{(m-1)}\biggr)[\vec \VV]
\end{equation}
where $C_i = F(\Delta_i)$.
By the QE property of MCs,
the subformula $\exists Y. \bigwedge_{i\in N\setminus J} C_i^{(i)}$
is \LRA-equivalent to a first-order formula $\chi$ where all literals are MCs over constants $\mathcal K$ and variables $\VV \cup X$. There are only finitely many such $\chi$ up to equivalence, as there are only finitely many MCs over a finite set of variables and constants.
Moreover, the number of possibilities for the sequence $C_{i_1}, \dots, C_{i_k}$ is bounded by $2^{2^m}$ since all these $C_{i_j}$ must be conjunctions of subsets of $A\,{\setminus}\,I$, and $k\,{\leq}\,m$.
Thus, up to equivalence, there are  finitely many possibilities for $h(\vec \Delta)$, so the history set is finite.
\end{proof}

\emph{Integer periodicity constraints} (IPCs) restrict linear \emph{integer} arithmetic (\LIA) and are \eg used in calendar formalisms~\cite{Demri06}.
Precisely, IPC atoms have the form $x = y$ or $x \odot d$ for $\odot \in \{=,\neq, <, >,\equiv_k\}$, or $x \equiv_k y + d$, for variables $x,y$ with domain $\mathbb Z$ and $k,d\in \mathbb N$. 
An \LTLfMT formula $\phi$ over \LIA is an \emph{IPC formula} if all first-order formulas in $\phi$ are IPCs, and a \emph{quasi-IPC formula} if all iteration conditions are IPCs.
IPC formulas are known to be decidable~\cite[Thm. 3]{Demri06}.

We extend this result to quasi-IPC formulas by using a quantifier elimination property as for MCs:
if $\phi$ is a first-order formula where all literals are IPCs over a set of constants $\mc K$ and variables $X \cup \{x\}$,
then one can compute a formula $\phi' \equiv_\LIA \exists x.\, \phi$ such that $\phi'$ is a first-order formula where all literals are IPCs over constants $\mc K$ and variables $X$ \cite[Thm. 2]{Demri06}. 
Then, the following can be proven exactly like \thmref{MCs}, using the fact that there are only finitely many \LIA formulas where all literals are IPCs over finite sets of variables and constants:

\begin{thm}
Satisfiability of quasi-IPC formulas is decidable.
\end{thm}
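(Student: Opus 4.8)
The plan is to mirror the proof of \thmref{MCs} essentially verbatim, substituting IPCs for MCs throughout. The only nontrivial ingredient specific to \LIA is the quantifier-elimination property for IPCs, which is supplied by \cite[Thm. 2]{Demri06}: eliminating an existential quantifier from a conjunction of IPCs over constants $\mc K$ and variables $X \cup \{x\}$ yields an \LIA-equivalent formula all of whose literals are again IPCs over $\mc K$ and $X$. With this in hand, the structural skeleton of the argument carries over without change.

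\begin{proof}
Let $\mc K$ be the set of constants, $I$ the set of iteration conditions, $A$ the set of all first-order formulas occurring in a quasi-IPC formula $\phi$, and $m$ the number of occurrences of formulas of $A$ in $\phi$. For a pre-model $\vec \Delta = \langle \Delta_0, \dots, \Delta_{n-1}\rangle$, let $J = \{i_1, \dots, i_k\} \subseteq \{0, \dots, n{-}1\}$ collect all indices $i_j$ such that $F(\Delta_{i_j})$ contains a formula in $A \setminus I$. As in \thmref{MCs}, $k \leq m$, since every occurrence of a first-order subformula of $\phi$ that is not an iteration condition can occur in at most one atom of a pre-model (by the expansion rules and minimality of atoms). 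Assume \emph{w.l.o.g.}\ that $n-1 \in J$. Partition the variables $\VV^{\leq n}$ of $\Omega(\vec \Delta)$ into the set $X$ of those occurring in $\{F(\Delta_j)^{(j)} \mid j \in J\}$, and $Y = \VV^{\leq n} \setminus X$. Exactly as in the MC case, $h(\vec \Delta)$ can be written as
\begin{equation}
\biggl(\exists X. \bigl (\exists Y. \bigwedge_{i\in N\setminus J} C_i^{(i)} \bigr) \wedge \bigwedge_{i\in J\setminus \{m-1\}} C_i^{(i)} \wedge L(C_{m-1})^{(m-1)}\biggr)[\vec \VV]
\end{equation}
where $C_i = F(\Delta_i)$. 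The literals of $C_i$ for $i \in N \setminus J$ are iteration conditions, hence IPCs; applying \cite[Thm. 2]{Demri06} repeatedly to eliminate the variables $Y$, the subformula $\exists Y. \bigwedge_{i\in N\setminus J} C_i^{(i)}$ is \LIA-equivalent to a first-order formula $\chi$ all of whose literals are IPCs over constants $\mc K$ and variables $\VV \cup X$. There are only finitely many \LIA formulas up to equivalence whose literals are IPCs over a fixed finite set of variables and constants, so only finitely many choices of $\chi$. Moreover, the number of possibilities for the sequence $C_{i_1}, \dots, C_{i_k}$ is bounded by $2^{2^m}$, since each $C_{i_j}$ is a conjunction of a subset of $A \setminus I$ and $k \leq m$. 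Hence, up to \LIA-equivalence there are finitely many possibilities for $h(\vec \Delta)$, so the history set of $\phi$ is finite, \ie $\phi$ has \theproperty. By \thmref{termination}, the tableau for $\phi$ is finite, and since the theory \LIA is decidable, satisfiability of $\phi$ is decidable.
\end{proof}

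I do not anticipate a genuine obstacle: the argument is a direct transcription of \thmref{MCs}, with the MC quantifier-elimination fact replaced by its IPC analogue. The only point requiring a moment's care is checking that the partition into $X$ and $Y$ interacts correctly with the fact that IPC atoms of the form $x \equiv_k y + d$ link \emph{two} variables (so that eliminating a $Y$-variable may reintroduce modular links among the remaining variables) — but this is exactly the situation \cite[Thm. 2]{Demri06} is designed to handle, so closure of the literal class under quantifier elimination is preserved, and the finiteness-up-to-equivalence count goes through unchanged.
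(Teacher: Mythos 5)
Your proof is correct and takes exactly the approach the paper intends: the paper itself states that the result "can be proven exactly like \thmref{MCs}," relying on the IPC quantifier-elimination property of \cite[Thm.~2]{Demri06} and the finiteness, up to equivalence, of \LIA formulas whose literals are IPCs over fixed finite sets of variables and constants. Your transcription of the MC argument, including the partition into $X$ and $Y$ and the counting of possible history constraints, matches the paper's reasoning.
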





\section{Related work and conclusions}
\label{sec:conclusions}

In this paper we considered the satisfiability problem for \LTLfMT, a highly expressive extension of \LTLf. In earlier work, a tableau system for \LTLfMT was proposed that is, however, incomplete to show unsatisfiability. In this paper, we proposed a pruning rule for this tableau that we proved sound and complete. We show that the tableau construction terminates whenever the \LTLfMT formula satisfies the semantic property of \theproperty, and use this abstract termination condition to prove decidability for several concrete, checkable, and relevant classes of formulas, extending results from the literature.

\smallskip

Given the limited expressivity of propositional \LTL, several extensions with richer background theories have been considered, in particular (fragments of) arithmetic theories~\cite{DD07, Demri06,DDV12,DLV19}.
The extension of \LTL with first-order theories is highly challenging, as even the most basic verification tasks become undecidable~\cite{CDMP22}.
A starting point for this work is the \LTLfMT tableau by Geatti \etal~\cite{GeattiGG22}, which provides a semi-decision procedure; but, lacking a pruning rule, is rarely able to show unsatisfiability, and no decidability results for fragments of \LTLfMT are given.
However, some decidability results for model checking and satisfiability (which are equivalent in linear-time temporal logics) for \LTL with more specific theories are known.
Demri and D'Souza~\cite{DD07} showed that satisfiability of LTL with monotonicity constraints (MCs), over both integers and rationals, is decidable in PSPACE, and the same holds for LTL over integer periodicity constraints~\cite{Demri06}.
Our results for the (\textsf{MC}) and (\textsf{IPC}) fragments strictly extend these decidability results, since we only restrict iteration conditions of formulas.
The picture gets more diverse for branching-time temporal logics equipped with similar arithmetic theories; in this case, satisfiability and model checking do no longer coincide~\cite{Cerans94,CKL16,Gascon09,FMW22c}.
Damaggio \etal~\cite{DDV12} considered LTL model checking for transition systems that operate over databases and include arithmetic conditions, and proved decidability if the system together with the \LTL formula satisfies the property of \emph{feedback freedom}.
For purely arithmetic transition systems, feedback freedom was extended by Felli \etal to that of \emph{bounded lookback}~\cite{FMW22}.
Our decidability result for (\textsf{BL}) takes this idea to arbitrary theories, and recasts it for the satisfiability problem, thus strictly extending~\cite{DDV12,FMW22}. We showed that in the context of satisfiability, (\textsf{BL}) implies decidability of the (\textsf{FX}) fragment, which has no counterpart in model checking.
Deutsch \etal~\cite{DLV19} proved decidability of model checking for hierarchic transition systems and a restricted variant of \LTL (HLTL-FO), but this logic is in general incomparable to \LTLfMT.
Our notion of \emph{history constraints} is inspired by the respective notions from~\cite{FMW22,DDV12}, though we recast it here for satisfiability and in the setting of a tableau system.\fitpar

Tableau systems for \LTL and extensions thereof have been extensively
considered~\cite{LichtensteinP00,Schwendimann98,Reynolds16a,GeattiGMR20}. The
tableau for \LTLfMT provided in \cite{GeattiGG22} is based on Reynolds' one-pass
and tree-shaped tableau for \LTL~\cite{Reynolds16a}, whose $\m{PRUNE}$ rule does
not transfer directly to the first-order case. Tableau calculi for first-order
extensions of \LTL have also been proposed~\cite{KontchakovLWZ04}, but they are
not parameterised over the underlying theory, and the considered logic do not
support $\nextvar$ and $\wnextvar$ terms.

\smallskip

Several directions for future work can be considered. Following the path taken
by \cite{GeattiGG22}, an SMT encoding of our $\m{PRUNE}$ rule would allow for its implementation in the BLACK temporal reasoning framework~\cite{GeattiGM19}.
Moreover, whether these results can be extended to a
version of \LTLfMT supporting time-varying relations is still open.
Finally, we want to study also other, related tasks such as 
branching-time logics modulo theories, and \LTLfMT monitoring~\cite{FMPW23}. 

\ack This work was partially funded by the UNIBZ project ADAPTERS, and the PRIN MIUR project PINPOINT Prot. 2020FNEB27. Nicola Gigante acknowledges the support of the PURPLE project, 1st Open 
Call for Innovators of the AIPlan4EU H2020 project, a project funded by EU Horizon 2020 research and innovation programme under GA n. 101016442 (since 2021).

\bibliography{references}

\begin{thebibliography}{10}

\bibitem{BacchusK00}
Fahiem Bacchus and Froduald Kabanza, `Using temporal logics to express search
  control knowledge for planning', {\em Artif. Intell.}, {\bf 116}(1-2),
  123--191, (2000).

\bibitem{BarrettSST21}
Clark~W. Barrett, Roberto Sebastiani, Sanjit~A. Seshia, and Cesare Tinelli,
  `Satisfiability modulo theories', in {\em Handbook of Satisfiability - Second
  Edition}, eds., Armin Biere, Marijn Heule, Hans van Maaren, and Toby Walsh,
  volume 336 of {\em Frontiers in Artificial Intelligence and Applications},
  1267--1329, {IOS} Press, (2021).

\bibitem{CDMP18}
Diego Calvanese, Giuseppe {de Giacomo}, Marco Montali, and Fabio Patrizi,
  `First-order \emph{{\(\mu\)}}-calculus over generic transition systems and
  applications to the situation calculus', {\em Inform. Comput.}, {\bf 259}(3),
   328--347, (2018).

\bibitem{CDMP22}
Diego Calvanese, Giuseppe De~Giacomo, Marco Montali, and Fabio Patrizi,
  `Verification and monitoring for first-order {LTL} with
  persistence-preserving quantification over finite and infinite traces', in
  {\em Proc.\ 31st IJCAI}, pp. 2553--2560, (2022).

\bibitem{CalvaneseGGMR19}
Diego Calvanese, Silvio Ghilardi, Alessandro Gianola, Marco Montali, and Andrey
  Rivkin, `Formal modeling and {SMT}-based parameterized verification of
  data-aware {BPMN}', in {\em Proc. of {BPM} 2019}, volume 11675 of {\em LNCS},
  pp. 157--175, (2019).

\bibitem{CGGMR20}
Diego Calvanese, Silvio Ghilardi, Alessandro Gianola, Marco Montali, and Andrey
  Rivkin, `{SMT}-based verification of data-aware processes: a model-theoretic
  approach', {\em Math. Struct. Comput. Sci.}, {\bf 30}(3),  271--313, (2020).

\bibitem{CalvaneseGM13}
Diego Calvanese, Giuseppe~De Giacomo, and Marco Montali, `Foundations of
  data-aware process analysis: a database theory perspective', in {\em Proc. of
  {PODS} 2013}, pp. 1--12. {ACM}, (2013).

\bibitem{CamachoBMM18}
Alberto Camacho, Jorge~A. Baier, Christian~J. Muise, and Sheila~A. McIlraith,
  `Finite {LTL} synthesis as planning', in {\em Proc.\ 28th ICAPS}, pp. 29--38,
  (2018).

\bibitem{CKL16}
C.~Carapelle, A.~Kartzow, and M.~Lohrey, `Satisfiability of {ECTL$^*$} with
  constraints', {\em Journal of Computer and System Sciences}, {\bf 82}(5),
  826--855, (2016).

\bibitem{Cerans94}
Karlis Cerans, `Deciding properties of integral relational automata', in {\em
  Proc.\ 21st ICALP}, volume 820 of {\em LNCS}, pp. 35--46, (1994).

\bibitem{CimattiGMRT20}
Alessandro Cimatti, Alberto Griggio, Enrico Magnago, Marco Roveri, and Stefano
  Tonetta, `{SMT}-based satisfiability of first-order {LTL} with event freezing
  functions and metric operators', {\em Inf. Comput.}, {\bf 272},  104502,
  (2020).

\bibitem{DDV12}
Elio Damaggio, Alin Deutsch, and Victor Vianu, `Artifact systems with data
  dependencies and arithmetic', {\em {ACM} Trans. Database Syst.}, {\bf 37}(3),
   22:1--22:36, (2012).

\bibitem{DeGiacomoV13}
Giuseppe {De Giacomo} and Moshe~Y. Vardi, `Linear temporal logic and linear
  dynamic logic on finite traces', in {\em Proc.\ 23rd IJCAI}, pp. 854--860,
  (2013).

\bibitem{deLeoniFM21Jods}
Massimiliano de~Leoni, Paolo Felli, and Marco Montali, `Integrating {BPMN} and
  {DMN:} modeling and analysis', {\em J. Data Semant.}, {\bf 10}(1),  165--188,
  (2021).

\bibitem{Demri06}
St{\'{e}}phane Demri, `{LTL} over integer periodicity constraints', {\em Theor.
  Comput. Sci.}, {\bf 360}(1-3),  96--123, (2006).

\bibitem{DD07}
St{\'{e}}phane Demri and Deepak D'Souza, `An automata-theoretic approach to
  constraint {LTL}', {\em Inform. Comput.}, {\bf 205}(3),  380--415, (2007).

\bibitem{DeutschLV16}
Alin Deutsch, Yuliang Li, and Victor Vianu, `Verification of hierarchical
  artifact systems', in {\em Proc. of {PODS} 2016}, pp. 179--194. {ACM},
  (2016).

\bibitem{DLV19}
Alin Deutsch, Yuliang Li, and Victor Vianu, `Verification of hierarchical
  artifact systems', {\em {ACM} Trans. Database Syst.}, {\bf 44}(3),
  12:1--12:68, (2019).

\bibitem{FMPW23}
Paolo Felli, Marco Montali, Fabio Patrizi, and Sarah Winkler, `Monitoring
  arithmetic temporal properties on finite traces', in {\em Proc.\ 35th AAAI},
  pp. 6346--6354, (2023).

\bibitem{FMW22c}
Paolo Felli, Marco Montali, and Sarah Winkler, `{CTL}* model checking for
  data-aware dynamic systems with arithmetic', in {\em Proc.\ 11th IJCAR},
  volume 13385, pp. 36--56, (2022).

\bibitem{FMW22}
Paolo Felli, Marco Montali, and Sarah Winkler, `Linear-time verification of
  data-aware dynamic systems with arithmetic', in {\em Proc.\ 34th AAAI}, pp.
  5642--5650, (2022).

\bibitem{Gascon09}
R{\'{e}}gis Gascon, `An automata-based approach for {CTL$^*$} with
  constraints', in {\em Proc.\ INFINITY 2006, 2007 and 2008}, volume 239, pp.
  193--211, (2009).

\bibitem{GeattiGG22}
Luca Geatti, Alessandro Gianola, and Nicola Gigante, `Linear temporal logic
  modulo theories over finite traces', in {\em Proc.\ 31st IJCAI}, pp.
  2641--2647, (2022).

\bibitem{GeattiGG22arXiv}
Luca Geatti, Alessandro Gianola, and Nicola Gigante, `Linear temporal logic
  modulo theories over finite traces (extended version)', {\em CoRR}, {\bf
  abs/2204.13693}, (2022).

\bibitem{GeattiGM19}
Luca Geatti, Nicola Gigante, and Angelo Montanari, `A {SAT}-based encoding of
  the one-pass and tree-shaped tableau system for {LTL}', in {\em Proc.\ 28th
  TABLEAUX}, volume 11714 of {\em LNCS}, pp. 3--20, (2019).

\bibitem{geatti2021one}
Luca Geatti, Nicola Gigante, Angelo Montanari, and Mark Reynolds, `One-pass and
  tree-shaped tableau systems for {TPTL} and {TPTLb+Past}', {\em Inform.
  Comput.}, {\bf 278},  104599, (2021).

\bibitem{GeattiGMR20}
Luca Geatti, Nicola Gigante, Angelo Montanari, and Mark Reynolds, `One-pass and
  tree-shaped tableau systems for {TPTL} and {TPTL\textsubscript{b}+Past}',
  {\em Inform. Comput.}, (2021).
\newblock in press.

\bibitem{GhilardiGMR22}
Silvio Ghilardi, Alessandro Gianola, Marco Montali, and Andrey Rivkin, `Petri
  net-based object-centric processes with read-only data', {\em Inf. Syst.},
  {\bf 107},  102011, (2022).

\bibitem{DeGiacomoMGMM14}
Giuseppe~De Giacomo, Riccardo~De Masellis, Marco Grasso, Fabrizio~Maria Maggi,
  and Marco Montali, `Monitoring business metaconstraints based on {LTL} and
  {LDL} for finite traces', in {\em Proc. of {BPM} 2014}, volume 8659 of {\em
  LNCS}, pp. 1--17, (2014).

\bibitem{KontchakovLWZ04}
Roman Kontchakov, Carsten Lutz, Frank Wolter, and Michael Zakharyaschev,
  `Temporalising tableaux', {\em Stud Logica}, {\bf 76}(1),  91--134, (2004).

\bibitem{KS16}
Daniel Kroening and Ofer Strichman, {\em Decision Procedures -- An Algorithmic
  Point of View, Second Edition}, Springer, 2016.

\bibitem{LiPZVR20}
Jianwen Li, Geguang Pu, Yueling Zhang, Moshe~Y. Vardi, and Kristin~Y. Rozier,
  `{SAT}-based explicit {LTLf} satisfiability checking', {\em Artif. Intell.},
  {\bf 289},  103369, (2020).

\bibitem{LichtensteinP00}
Orna Lichtenstein and Amir Pnueli, `{Propositional Temporal Logics:
  Decidability and Completeness}', {\em Logic Journal of the {IGPL}}, {\bf
  8}(1),  55--85, (2000).

\bibitem{Pnueli77}
Amir Pnueli, `The temporal logic of programs', in {\em 18th Annual Symposium on
  Foundations of Computer Science}, pp. 46--57. {IEEE} Computer Society,
  (1977).

\bibitem{Reynolds16a}
Mark Reynolds, `{A New Rule for {LTL} Tableaux}', in {\em Proc. of the
  7\textsuperscript{th} International Symposium on Games, Automata, Logics and
  Formal Verification}, volume 226 of {\em {EPTCS}}, pp. 287--301, (2016).

\bibitem{RozierV10}
Kristin~Y. Rozier and Moshe~Y. Vardi, `{LTL} satisfiability checking', {\em
  Int. J. Softw. Tools Technol. Transf.}, {\bf 12}(2),  123--137, (2010).

\bibitem{Schwendimann98}
S.~Schwendimann, `{A New One-Pass Tableau Calculus for {PLTL}}', in {\em Proc.\
  7th TABLEAUX}, volume 1397 of {\em LNCS}, pp. 277--292, (1998).

\end{thebibliography}

\appendix
\clearpage
\section{Proofs}

\lemmaHistory*
\begin{proof}
Both items are by induction on $m$.

(1)
If $m=1$ and $M, \langle\alpha_0, \alpha_1\rangle \models \langle C_0\rangle$ then $M, \combine{\alpha_{0}}{\alpha_{1}} \models L(C_{0})$, so after renaming and quantification, 
\[M, \alpha_{1} \models (\exists V^0.\: L(C_0)^{(0)})[\vec \VV] = h(\langle C_0\rangle).\]

For the induction step, suppose $\vec C=\seq{C_0, \dots, C_{m}}$ and
$M, \seq{\alpha_0, \dots \alpha_{m+1}} \models \vec C$.
Let $M'$ be like $M$ but such that $M' \models \ell$.
For $\vec C'=\seq{C_0, \dots, C_{m-1}}$, we have 
$M', \seq{\alpha_0, \dots \alpha_{m}} \models \vec C'$.
By the induction hypothesis, $M', \alpha_{m} \models h(\vec C')$.
Since $M' \models \ell$, it also holds that $M', \alpha_{m} \models (\exists V^0 \dots V^{m-1}.\bigwedge_{i=0}^{m-1} C_i^{(i)})[\vec \VV]$, \ie, $M'$ and $\alpha_m$ satisfy the formula that is like  $h(\vec C')$ but where $L$ is not applied to $C_{m-1}$; call this fact ($\star$).
Let $\alpha_{m}'$ be the substitution with domain $V^m$ such that $\alpha_{m}'(v^m) = \alpha_{m}(v)$ and 
$\alpha_{m+1}'$ have domain $V_{m+1}$ such that $\alpha_{m+1}'(v^{m+1}) = \alpha_{m+1}(v)$
for all $v\in \VV$, so they are like $\alpha_{m}$ and $\alpha_{m+1}$, respectively, but with domains
$V^m$ and $V^{m+1}$.
Since $M, \seq{\alpha_0, \dots \alpha_{m+1}} \models \vec C$, we have $M, \combine{\alpha_m}{\alpha_{m+1}} \models L(C_m)$, so $M, \alpha_m' \cup \alpha_{m+1}' \models L(C_m)^{(m)}$.
From ($\star$) we have
$M, \alpha_{m}' \models \exists V^0 \dots V^{m-1}.\bigwedge_{i=0}^{m-1} C_i^{(i)}$ (using $M$ instead of $M'$, as $\ell$ is not involved).
By combining this with the above, we have 
$M, \alpha_m' \cup \alpha_{m+1}'  \models \exists V^0 \dots V^{m-1}.\bigwedge_{i=0}^{m-1} C_i^{(i)} \wedge L(C_m)^{m}$, so 
$M, \alpha_{m+1}'  \models \exists V^0 \dots V^{m}.\bigwedge_{i=0}^{m-1} C_i^{(i)} \wedge L(C_m)^{m}$, 
hence by renaming variables, $M, \alpha_m \models h(\vec C)$.

(2) Let $m=1$ and $M, \alpha \models h(\langle C_0\rangle)$, which means
$M, \alpha \models (\exists V^0.\: L(C_0))^{(0)}[\vec \VV]$.
Let $\alpha_1'$ have domain $V^1$ such that $\alpha_1'(v^1) = \alpha(v)$ for all $v\in \VV$.
There must be an assignment $\alpha_0'$ with domain $V^0$ such that
$M, \alpha_0' \cup \alpha_1' \models L(C_0)^{(0)}$, so for $\alpha_0$ with domain $V$ such that
$\alpha_0'(v^0) = \alpha_0(v)$ for all $v\in V$, it holds that
$M, \combine{\alpha_{0}}{\alpha} \models L(C_{0})$, so
$M, \langle\alpha_0, \alpha\rangle \models \langle C_0\rangle$.

For the induction step, let $\vec C=\seq{C_0, \dots, C_{m}}$,
$\vec C'=\seq{C_0, \dots, C_{m-1}}$, and suppose $M, \alpha \models h(\vec C)$, so
\[M, \alpha \models (\exists V^0 \dots V^{m}.\:\bigwedge_{i=0}^{m-1} C_i^{(i)} \wedge L(C_{m})^{m})[\vec \VV]\]
Let $\widehat \alpha$ have domain $V^{m+1}$ such that $\widehat \alpha(v^{m+1}) = \alpha(v)$
for all $v\in \VV$, so
$M, \widehat \alpha \models \exists V^0 \dots V^{m}.\:\bigwedge_{i=0}^{m-1} C_i^{(i)} \wedge L(C_{m})^{m}$.
Thus there is an assignment $\widehat \alpha'$ with domain $V^{m}$ such that
$M, \widehat\alpha \cup \widehat\alpha' \models \exists V^0 \dots V^{m-1}.\:\bigwedge_{i=0}^{m-1} C_i^{(i)} \wedge L(C_{m})^{m}$ ($\star$).
For $\alpha'$ with domain $\VV$ such that $\widehat\alpha'(v^{m}) = \alpha'(v)$
for all $v\in \VV$, it thus holds that
$M, \alpha' \models \exists V^0 \dots V^{m-1}.\:\bigwedge_{i=0}^{m-1} C_i^{(i)}$.
Let $M'$ be like $M$ but such that $M' \models \ell$. We have
$M', \alpha' \models (\exists V^0 \dots V^{m-1}.\:\bigwedge_{i=0}^{m-2} C_i^{(i)} \wedge L(C_{m-1})^{m-1})[\vec \VV] = h(\vec C')$.
By the induction hypothesis, there is a sequence $\seq{\alpha_0, \dots \alpha_{m}}$ such that
$M', \seq{\alpha_0, \dots \alpha_{m}} \models \vec C'$ and $\alpha_m = \alpha'$.
Since $M' \models \ell$, by definition of $L$, it holds that 
$M, \combine{\alpha_i}{\alpha_{i+1}} \models C_{i}$ for all $0 \leq i < m$ (where $C_{m-1}$ is not modified by $L$).
From ($\star$), we also have $M, \combine{\alpha'}{\alpha} \models L(C_{m})$, so
for $\vec \alpha = \seq{\alpha_0, \dots \alpha_{m}, \alpha}$ we have 
$M, \vec \alpha \models \vec C$.
\end{proof}

\redundantRemove*

\begin{proof}
First, we show that, $\vec {\Delta'}$ is still a pre-model for $\phi$:
Since $\Delta_j = \Delta_k$, for every $\tomorrow \psi \in \Delta_j$ it must hold that
$\psi \in \Delta_{k+1}$; and for every $\weaktomorrow \psi \in \Delta_j$, there is nothing to show if $k=n$, or otherwise $\psi \in \Delta_{k+1}$ must hold as well.
If $\psi_1 \until \psi_2 \in \Delta_j$ then $\psi_1 \until \psi_2 \in \Delta_k$, so the eventuality must be fulfilled at a later point, and similarly for $\release$.
Minimality with respect to set inclusion is clear.

It remains to show that $\vec \Delta'$ is satisfiable.
We abbreviate the first-order formulas in $\Delta_i$ by $\phi_i := \bigwedge F(\Delta_i)$ for all $0 \leq i < n$.
By assumption, $\vec \Delta$ is satisfiable, so
$\Omega(\langle \phi_1,\ldots,\phi{n-1}\rangle) \land \neg\ell$ is $\TT$-satisfiable.
Thus also
$\hist(\vec \Delta) \land \neg \ell$ is $\TT$-satisfiable, so there are a $\Sigma$-structure $M$ and a state variable assignment $\alpha$
such that $M, \alpha \models \hist(\vec \Delta) \land \neg \ell$ ($\star$).
By \lemref{history} there is a sequence
$\vec{\alpha} = \seq{\alpha_0, \dots, \alpha_n}$ 
such that $\alpha_n = \alpha$ and
$M,\vec{\alpha} \models \seq{\phi_1, \dots, \phi_n}$.
Let $M'$ be like $M$ except that $M' \models \ell$.
Then $M',\seq{\alpha_0, \dots, \alpha_k} \models \seq{\phi_1, \dots, \phi_k} \wedge \ell$.
By \lemref{history} it thus holds that $M', \alpha_k \models \hist(\vec \Delta_{\leq k})$.
Since $\hist(\vec \Delta_{\leq k}) \models_\TT \hist(\vec \Delta_{\leq j})$,
it holds that
$M', \alpha_k \models \hist(\vec \Delta_{\leq j})$.

Again by \lemref{history} there is a sequence
$\vec{\alpha'} = \seq{\alpha_0', \dots, \alpha_j'}$
such that $\alpha_j' = \alpha_k$ and
$M',\vec{\alpha'} \models \seq{\phi_0, \dots, \phi_j}$.
Since $M' \models \ell$, we have 
$M', \combine{\alpha_i}{\alpha_{i+1}} \models \phi_{i}$ for all $0 \leq i < j$.
With $\alpha_j' = \alpha_k$, it follows that the combined sequence 
$\vec{\alpha}'' = \seq{\alpha_0', \dots, \alpha_{j-1}', \alpha_{k}, \dots, \alpha_n}$
satisfies
$M', \vec{\alpha}'' \models \seq{\phi_1, \dots, \phi_{j-1}, \phi_k, \dots, \phi_n}$.
Again by \lemref{history}, $\hist(\vec \Delta')$ is $\TT$-satisfiable.
Finally, $M, \alpha_n \models \hist(\vec{\Delta'}) \land \neg \ell$ must hold
because  $M, \combine{\alpha_{n-1}}{\alpha_n} \models \phi_n \land \neg \ell$ follows from ($\star$). 
\end{proof}
\end{document}